\documentclass[11pt,a4paper]{article}

\usepackage[utf8]{inputenc}
\usepackage[T1]{fontenc}
\usepackage{lmodern}
\usepackage{microtype}
\usepackage[margin=1.0in]{geometry}
\usepackage{amsmath,amssymb,amsthm}
\usepackage{mathtools}
\usepackage{booktabs}
\usepackage{tabularx}
\usepackage{multirow}
\usepackage{graphicx}
\usepackage{xcolor}
\usepackage{hyperref}
\usepackage{natbib}
\usepackage{enumitem}
\usepackage{float}
\usepackage{caption}
\usepackage{subcaption}
\usepackage{algorithm}
\usepackage{algpseudocode}
\usepackage{thmtools}
\usepackage{setspace}
\usepackage{tikz}
\usepackage{pgfplots}
\pgfplotsset{compat=1.18}
\usetikzlibrary{arrows.meta,shapes,positioning,calc,decorations.pathreplacing,
                backgrounds,fit,shadows,patterns,matrix,mindmap,trees}

\definecolor{gdcbBlue}{RGB}{31,119,180}
\definecolor{gdcbOrange}{RGB}{255,127,14}
\definecolor{gdcbGreen}{RGB}{44,160,44}
\definecolor{gdcbRed}{RGB}{214,39,40}
\definecolor{gdcbPurple}{RGB}{148,103,189}
\definecolor{gdcbGray}{RGB}{127,127,127}
\definecolor{boxBlue}{RGB}{219,234,254}
\definecolor{boxGreen}{RGB}{220,252,231}
\definecolor{boxOrange}{RGB}{255,237,213}
\definecolor{boxPurple}{RGB}{237,233,254}
\definecolor{boxRed}{RGB}{254,226,226}
\definecolor{darkBlue}{RGB}{30,64,175}
\definecolor{darkGreen}{RGB}{20,83,45}
\definecolor{darkPurple}{RGB}{76,29,149}
\definecolor{midGray}{RGB}{209,213,219}
\definecolor{lightGray}{RGB}{243,244,246}

\hypersetup{
  colorlinks=true,
  linkcolor=darkBlue,
  citecolor=darkGreen,
  urlcolor=darkBlue
}

\newtheorem{theorem}{Theorem}[section]
\newtheorem{lemma}[theorem]{Lemma}
\newtheorem{corollary}[theorem]{Corollary}
\newtheorem{proposition}[theorem]{Proposition}
\newtheorem{definition}[theorem]{Definition}
\newtheorem{remark}[theorem]{Remark}
\newtheorem{assumption}[theorem]{Assumption}

\newcommand{\E}{\mathbb{E}}
\newcommand{\Var}{\operatorname{Var}}
\newcommand{\xb}{\mathbf{x}}
\newcommand{\A}{\mathcal{A}}
\newcommand{\X}{\mathcal{X}}

\newcommand{\D}{\mathcal{D}}
\newcommand{\Env}{\mathcal{E}}
\newcommand{\Y}{\mathcal{Y}}
\newcommand{\scalerSpace}{\Delta}
\newcommand{\ParamSpace}{\Theta}

\newcommand{\anom}{a^{\text{nom}}}
\newcommand{\erec}{e^{\text{rec}}}
\newcommand{\eexec}{e^{\text{exec}}}
\newcommand{\ahuman}{a^{\text{human}}}

\newcommand{\deltafn}{\delta_\theta}
\newcommand{\deltafnopt}{\delta^{*}}

\newcommand{\compose}{\Phi}
\newcommand{\gate}{g}

\begin{document}

\title{\textbf{Gated Decoupled Compositional Bandits:\\
       A Unified Theory of Contextual Bandits with\\
       Supervised-Calibrated Action Scaling and\\
       Pre-Execution Gating}}

\author{
  Oleg Miroshnichenko
}

\date{August 2026}

\maketitle

\begin{abstract}
We introduce \textbf{Gated Decoupled Compositional Bandits (GDCB)} --- a family of
contextual bandit algorithms characterised by three structural innovations that,
taken together, fall outside the existing taxonomy of LinUCB, LinTS, HierTS,
factored bandits, neural contextual bandits, and RLHF. In a GDCB system,
(i)~the action delivered to the environment is the \emph{composition} of a
nominal arm (drawn by a discrete or hierarchical bandit) with a context-dependent
scaler $\deltafn(\xb)$; (ii)~the scaler parameter $\theta$ is learned in a
\emph{separate supervised loop}, not jointly with arm selection; and
(iii)~every action passes through a \emph{pre-execution gate} $\gate$ that may modify
or veto the composed action before it reaches the environment.
We formalise this class of algorithms, prove four structural theorems
characterising its statistical behaviour, and show that six industrially
significant systems --- short-term rental dynamic pricing, clinical drug dosing,
credit origination, grid demand response, content moderation, and
large-language-model tool-use agents --- are all instances of GDCB differing
only in the choice of composition operator $\compose$, scaler family $\deltafn$,
and gate $\gate$.
The central theoretical contribution is the \emph{Decoupling Variance Reduction}
theorem: a well-calibrated scaler removes context-induced variance from the
nominal arm-to-reward mapping, turning a non-stationary contextual bandit
problem into an approximately stationary one and accelerating posterior
concentration. The \emph{Gate-Induced Equivalence} theorem further shows that
under a stationary pre-execution gate, historical data collected under
\emph{any} prior policy is a valid warm-up initialiser for the bandit
posterior without importance-sampling correction --- generalising the companion
P-HITL result from human approval to arbitrary gates (safety shields, compliance
rules, moderator filters).
The central conceptual reframe: in regulated, high-stakes domains, the
structural constraints typically treated as deployment frictions --- human
approval gates, compliance rules, safety shields --- are not obstacles to
learning but rather the mechanism that makes fast deployment possible.
The companion paper \citep{PHITL2026} validates instance~\#1 (STR dynamic pricing)
empirically on real production data.
\end{abstract}

\paragraph{Keywords.}
contextual bandits, compositional actions, supervised calibration,
human-in-the-loop, pre-execution gating, off-policy evaluation, warm-up,
regulated AI, dynamic pricing, clinical decision support, safe reinforcement
learning.

\tableofcontents
\newpage

\section{Introduction}
\label{sec:intro}

\subsection{A pattern that recurs across high-stakes domains}

In a short-term rental (STR) revenue management system, a contextual bandit recommends
a price multiplier for tonight's listing. In a clinical decision support system, it
recommends a dosage for today's patient. In a credit origination system, it recommends
an interest rate for this applicant. In a grid demand-response system, it recommends
a load-shedding action. In a content moderation pipeline, it recommends an action
(remove, warn, allow) for this post. In an LLM agent system, it recommends a tool
invocation for this query.

These systems are typically described as distinct algorithmic instances:
hotel revenue management \citep{Ferreira2016}, RL for clinical decision support
\citep{Liao2020,Komorowski2018}, credit contextual bandits \citep{Bouneffouf2012},
safe RL for grid control \citep{Wang2020grid}, bandits for content moderation
\citep{Chouldechova2018}, and bandit-based LLM routing \citep{Wang2024router}.
They share no common theoretical framework in the bandit literature.

We observe that \textbf{all six systems share an identical three-component
architecture} (Figure~\ref{fig:arch}):
\begin{enumerate}[leftmargin=*]
  \item A \textbf{nominal arm space} (typically discrete and small: 5 dose levels,
        5 price multipliers, 3 moderation actions), over which a posterior is
        maintained by Thompson sampling or UCB.
  \item A \textbf{supervised context-scaler} parameter vector
        (patient features $\to$ dose offset; market features $\to$ price multiplier;
        applicant features $\to$ risk spread), calibrated by ridge regression,
        Gaussian process, or neural supervised learning.
  \item A \textbf{pre-execution gate} (physician approval, revenue-manager approval,
        compliance rule, safety shield, moderator review), which may accept, modify,
        or veto the composed action before it is applied.
\end{enumerate}

\begin{figure}[H]
\centering
\includegraphics[width=\linewidth]{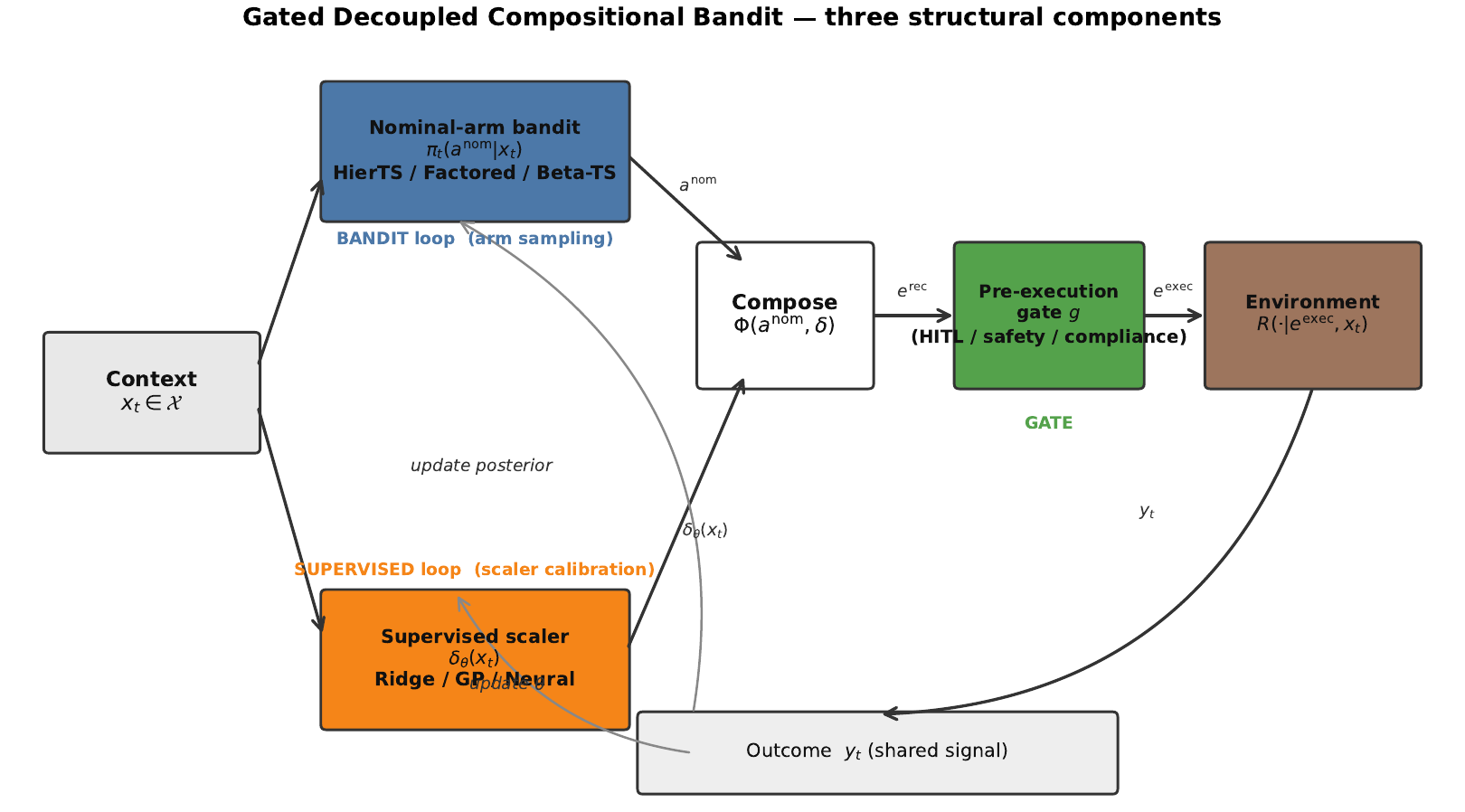}
\caption{%
  \textbf{The three-component GDCB architecture.}
  A nominal-arm bandit samples $\anom_t$ from a posterior (HierTS / Factored / Beta-TS);
  a supervised scaler module emits $\deltafn(\xb_t)$ (Ridge / GP / Neural);
  the composition operator $\compose$ combines them into a recommended executable $\erec_t$;
  the pre-execution gate $\gate$ (HITL / safety / compliance) produces the
  final executed action $\eexec_t$ that reaches the environment.
  The outcome $y_t$ is the \emph{only shared signal} --- bandit and supervised module
  update disjoint parameter sets (bandit loop in blue; supervised loop in orange).
}
\label{fig:arch}
\end{figure}

\subsection{The gap in the existing bandit taxonomy}

Existing contextual bandit algorithms do not capture this architecture:

\begin{itemize}[leftmargin=*]
  \item \textbf{LinUCB / LinTS} \citep{Li2010,AgrawalGoyal2013} estimate context weights
        and arm preferences \emph{jointly} by a single UCB or posterior update; the
        supervised signal is never extracted into a separate pipeline, the gate is never
        modelled.
  \item \textbf{HierTS / Deep HierTS} \citep{HongNeurIPS2021,HongICML2022} share arm
        posteriors across tasks via hierarchical priors, but operate entirely on nominal
        arms --- no scaler, no gate.
  \item \textbf{Factored bandits} \citep{ZimmertSeldin2018} decompose arms into Cartesian
        factors, but factors are still chosen by the bandit, not calibrated externally.
  \item \textbf{Safe RL / shielded RL} \citep{Alshiekh2018,Amani2019} introduce a gate,
        but shields are hand-designed constraints, not statistical gates whose stationarity
        can be leveraged for off-policy warm-up.
  \item \textbf{RLHF} \citep{Christiano2017,Ouyang2022} uses human feedback to learn a
        reward model, but the human evaluates outcomes \emph{post-execution}, not gates
        actions \emph{pre-execution}.
\end{itemize}

The GDCB framework subsumes all of these as edge cases (\S\ref{sec:relation}) while
capturing the three-component architecture as the central design pattern.

\subsection{Contributions}

\begin{enumerate}[leftmargin=*]
  \item \textbf{GDCB formalisation} (\S\ref{sec:formulation}): a class of contextual
        bandits characterised by composition operator $\compose$, supervised scaler
        $\deltafn$, and pre-execution gate $\gate$.

  \item \textbf{Four structural theorems} (\S\ref{sec:theorems}): Decoupling Variance
        Reduction, Gate-Induced Equivalence, Regret Decomposition, and Sample-Complexity
        Lifting.

  \item \textbf{Instantiation theorem} (\S\ref{sec:instantiations}): six industrial
        systems are instances of GDCB obtained by distinct $(\compose, \deltafn, \gate)$
        choices, with each structural theorem specialising cleanly.

  \item \textbf{Experimental protocol template} (\S\ref{sec:protocol}): a unified
        evaluation framework for any GDCB instance.

  \item \textbf{Five new instantiation sketches}: clinical dosing, credit origination,
        grid demand response, content moderation, LLM tool use.
\end{enumerate}

\subsection{Relationship to the companion P-HITL paper}

The companion paper \emph{P-HITL} \citep{PHITL2026} is the first empirical instantiation
of the GDCB framework. Table~\ref{tab:phitl-mapping} summarises the mapping.

\begin{table}[H]
  \centering
  \caption{Mapping from P-HITL private-case results to generic GDCB structural theorems.}
  \label{tab:phitl-mapping}
  \small
  \begin{tabularx}{\textwidth}{lXl}
    \toprule
    \textbf{P-HITL result (instance \#1)}  &
    \textbf{Generic GDCB result (this paper)} &
    \textbf{Specialisation} \\
    \midrule
    Dual cold-start algorithm &
    Theorem~\ref{thm:lifting} (Sample-Complexity Lifting) &
    $\compose = \times$, $\deltafn$ = day-signal, $\gate$ = human approval \\
    HITL structural equivalence &
    Theorem~\ref{thm:gate} (Gate-Induced Equivalence) &
    $\gate$ = human approval, stationary operator \\
    25-arm factored bandit + day-signal &
    Corollary~\ref{cor:c2f} (from Thm.~\ref{thm:decoupling}) &
    $\compose = \times$, $\varepsilon$ bounded by ridge error \\
    Cross-domain applicability survey &
    \S\ref{sec:instantiations} and \S\ref{sec:instance-details} &
    All six instantiations \\
    \bottomrule
  \end{tabularx}
\end{table}

\section{Related Work}
\label{sec:related}

\subsection{Linear and Generalised-Linear Contextual Bandits}

LinUCB \citep{Li2010}, LinTS \citep{AgrawalGoyal2013}, and GLM-UCB \citep{Filippi2010}
model reward as a parametric function of context and arm, learned jointly. Our framework
differs: the context-scaling parameters are learned by a \emph{separate} supervised
pipeline. This separation enables specialised offline calibration (ridge regression with
selection-bias reweighting, GP hyperparameter tuning, neural pretraining) that is
infeasible inside a UCB update rule.

\subsection{Hierarchical, Factored, and Compositional Bandits}

HierTS \citep{HongNeurIPS2021}, Deep HierTS \citep{HongICML2022}, and metadata multi-task
bandits \citep{WanNeurIPS2021} hierarchically share arm posteriors across tasks. Factored
bandits \citep{ZimmertSeldin2018} decompose arms into Cartesian factors. None separates a
supervised scaler from the nominal arm space. Combinatorial bandits \citep{Chen2013}
consider combinations of atomic sub-actions, but all sub-actions are chosen by the
bandit; in GDCB only the nominal arm is sampled.

\subsection{Off-Policy Evaluation and Warm-Up}

OPE \citep{Precup2000} and doubly robust estimators \citep{Dudik2011} address reward
estimation under a behaviour policy different from the target policy.
Our Theorem~\ref{thm:gate} (Gate-Induced Equivalence) shows that a \emph{stationary
pre-execution gate} removes the need for IS correction in warm-up without invoking
doubly robust estimators --- a novel result with no direct counterpart in the OPE
literature.

\subsection{Human-in-the-Loop Learning}

HITL learning literature predominantly addresses active learning \citep{Settles2012} and
RLHF \citep{Christiano2017,Ouyang2022}. Our Theorem~\ref{thm:gate} differs: (i)~the
human gates the action \emph{before} execution rather than evaluating outcomes
\emph{after}; (ii)~the gate is a statistical object whose stationarity is leveraged for
warm-up. The closest prior work --- HITL bandits for mobile health \citep{Liao2020} and
educational recommendation \citep{Rafferty2019} --- treats approval as a constraint rather
than a statistical asset.

\subsection{Safe RL and Constrained Bandits}

Shielded RL \citep{Alshiekh2018} and constrained bandits \citep{Amani2019} enforce hard
constraints on executed actions. GDCB subsumes this as the special case where $\gate$ is
a hard safety filter. Theorem~\ref{thm:gate} shows that any such filter makes historical
data valid warm-up when stationary.

\subsection{Domain-specific bandit literature}
\label{sec:domain-specific}

STR pricing has been studied via hedonic regression \citep{Gibbs2018} and demand
forecasting \citep{Qiu2020}; the companion P-HITL paper is the first bandit treatment.
Clinical decision support has addressed contextual bandits for drug dosing and RL for
sepsis treatment \citep{Liao2020,Komorowski2018}. Credit origination has applied
contextual bandits to loan pricing \citep{Bouneffouf2012} and counterfactual credit
scoring \citep{DeArteaga2020}. Grid demand response has used bandit-based demand
response \citep{ONeill2010} and safe RL for power systems \citep{Wang2020grid}. Content
moderation has applied bandit-based trust-and-safety tools \citep{Chouldechova2018}.
LLM agent tool use has explored bandit routing \citep{Wang2024router} and tool-selection
RL \citep{Schick2023}.

None of these domain-specific treatments recognises the shared three-component
architecture that GDCB formalises --- each domain has independently developed its own
solution to what is, structurally, the same problem.

\section{The GDCB Framework}
\label{sec:formulation}

\subsection{Notation}

Let $\X$ be the context space, $\A$ the nominal arm space (finite or parameterised),
$\scalerSpace$ the scaler space, $\Env$ the executable-action space, and $\Y$ the
outcome space. Let $\ParamSpace$ be the space of scaler parameters.

\begin{itemize}[leftmargin=*]
  \item A \textbf{composition operator} is $\compose: \A \times \scalerSpace \to \Env$.
  \item A \textbf{scaler family} is $\{\deltafn: \X \to \scalerSpace \mid \theta \in \ParamSpace\}$.
  \item A \textbf{pre-execution gate} is $\gate: \Env \times \X \to \Env$ (possibly stochastic).
  \item A \textbf{reward model} is $R(\cdot \mid e, x)$ over $\Y$, with scalar reward
        $r = \varphi(y)$ via functional $\varphi$.
\end{itemize}

\begin{definition}[GDCB instance]
\label{def:gdcb-instance}
A \textbf{GDCB instance} is a tuple:
\begin{equation*}
  \mathcal{G} = \bigl(\X,\, \A,\, \scalerSpace,\, \Env,\, \Y,\, \ParamSpace,\,
                       \compose,\, \{\deltafn\},\, \gate,\, R\bigr).
\end{equation*}
\end{definition}

\subsection{The GDCB protocol}

At each time step $t \in \{1, \ldots, T\}$:
\begin{enumerate}[leftmargin=*,label=\arabic*.]
  \item Environment reveals context $\xb_t \in \X$.
  \item Bandit $\mathcal{B}$ samples nominal arm
        $\anom_t \sim \pi_t(\cdot \mid \xb_t, \mathcal{H}_{t-1})$.
  \item Scaler $\delta_{\theta_t}(\xb_t) \in \scalerSpace$ is evaluated.
  \item Recommended executable: $\erec_t = \compose(\anom_t, \delta_{\theta_t}(\xb_t))$.
  \item Gate produces: $\eexec_t = \gate(\erec_t, \xb_t)$.
  \item Environment returns: $y_t \sim R(\cdot \mid \eexec_t, \xb_t)$, $r_t = \varphi(y_t)$.
  \item Bandit updates $\pi_{t+1}$ from $(\xb_t, \anom_t, \eexec_t, r_t)$.
        Supervised module updates $\theta_{t+1}$ from outcomes
        $\{(\xb_s, \eexec_s, y_s)\}_{s \le t}$.
\end{enumerate}

\textbf{Key property}: the bandit and supervised module share only the outcome $r_t$;
they update disjoint parameters via distinct algorithms.

\subsection{Three core assumptions}

\begin{assumption}[Composition regularity]
$\compose$ is Lipschitz in both arguments: $\exists L_\compose > 0$ such that
$\|\compose(a_1, \delta) - \compose(a_2, \delta)\|_\Env \le L_\compose\|a_1 - a_2\|_\A$
and similarly in $\delta$. Satisfied by multiplicative, additive, and Lipschitz-neural compositions.
\end{assumption}

\begin{assumption}[Gate stationarity]
$\gate$ is the same function in the historical and live regimes. If stochastic, its
conditional distribution $P(\gate(e, x) \mid e, x)$ is time-invariant.
\end{assumption}

\begin{assumption}[Scaler identifiability]
The supervised loss $\mathcal{L}_{\text{sup}}(\theta; \D)$ has a unique minimiser $\theta^*$,
and $\hat\theta_N$ is $\sqrt{N}$-consistent: $\|\hat\theta_N - \theta^*\| = O_p(1/\sqrt{N})$.
Standard for ridge regression, GP posterior means, and neural ERM under mild conditions.
\end{assumption}

\subsection{Running example: the STR instantiation (P-HITL)}

To ground the abstract framework, we instantiate with the P-HITL STR pricing setting:

\begin{itemize}[leftmargin=*]
  \item $\X$: nightly context $(o_t, d_t, \mathbf{1}[\text{gap}_t], f_t, \text{dow}_t, \ldots)$.
  \item $\A = \{a^A_1,\ldots,a^A_5\} \times \{a^B_1,\ldots,a^B_5\}$: 25-arm factored grid.
  \item $\scalerSpace = \mathbb{R}_{>0}$: positive scalar multiplier.
  \item $\compose(a^A \cdot a^B, \delta) = \bar{r}_t \cdot \mu^{\text{LLM}} \cdot \delta \cdot a^A \cdot a^B$: multiplicative.
  \item $\delta_\theta(\xb) = \delta^{\text{occ}}_\theta \cdot \delta^{\text{gap}}_\theta \cdot \delta^{\text{lead}}_\theta \cdot \delta^{\text{inv}}_\theta$: four-factor day signal.
  \item $\gate(e, \xb)$: revenue manager approval (accept w.p.\ $p(\xb)$, else $e^{\text{human}}$).
  \item $R(\cdot \mid e, \xb)$: Bernoulli booking model $\sigma(\alpha - \beta e + \gamma \xb^\top\boldsymbol{\eta})$.
  \item $\varphi(y_0, y_1) = y_0 \cdot y_1$: revenue per night.
\end{itemize}

\section{Structural Theorems}
\label{sec:theorems}

We prove four structural results that apply to \emph{any} GDCB instance satisfying
Assumptions 1--3. Each is the generic counterpart of a result stated informally in
the domain-specific companion P-HITL paper.

\subsection{Theorem 1 --- Decoupling Variance Reduction}

\begin{theorem}[Decoupling Variance Reduction]
\label{thm:decoupling}
Suppose the true reward admits a decomposition
$\E[r \mid \anom, x] = u\bigl(\compose(\anom, \deltafnopt(x))\bigr)$
for some latent scaler $\deltafnopt: \X \to \scalerSpace$ and $L_u$-Lipschitz utility $u$.
Let $\hat\theta$ satisfy $\|\delta_{\hat\theta} - \deltafnopt\|_\infty \le \varepsilon$.
Then the per-arm variance of the empirical mean under the GDCB scaler satisfies:
\begin{align}
  &\Var_x[u(\compose(\anom, \delta_{\hat\theta}(x)))] \nonumber\\
  &\quad\le\; \Var_x[u(\compose(\anom, \deltafnopt(x)))]
   \;+\; (L_u L_\compose \varepsilon)^2 \nonumber\\
  &\qquad\quad+\; 2 L_u L_\compose \varepsilon \cdot
        \sqrt{\Var_x[u(\compose(\anom, \deltafnopt(x)))]}.
  \label{eq:var_bound}
\end{align}
Moreover, if $\deltafnopt \in \{\delta_\theta : \theta \in \ParamSpace\}$, then
as $\hat\theta \to \theta^*$:
\begin{equation}
  \Var[\hat{v}(\anom)] \;\to\; \frac{\sigma_R^2}{n_{\anom}},
  \qquad\text{independent of } \Var_x[\deltafnopt(x)].
  \label{eq:variance_limit}
\end{equation}
\end{theorem}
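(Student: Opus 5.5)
The first claim, \eqref{eq:var_bound}, is a Minkowski inequality for standard deviations combined with Assumption~1 (Composition regularity). Fix the nominal arm $\anom$ and define the two random variables (randomness over $x$)
\[
X(x) = u\bigl(\compose(\anom, \deltafnopt(x))\bigr), \qquad Y(x) = u\bigl(\compose(\anom, \delta_{\hat\theta}(x))\bigr),
\]
and their difference $D = Y - X$. The argument proceeds in three steps.

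\textbf{Step 1 (pointwise bound on $D$).} Lipschitz continuity of $u$ together with Lipschitz continuity of $\compose$ in its second argument gives, for every $x$,
\[
|D(x)| \le L_u L_\compose \|\delta_{\hat\theta}(x) - \deltafnopt(x)\|_\scalerSpace \le L_u L_\compose \varepsilon .
\]

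\textbf{Step 2 (variance of $D$).} Since $\Var(D) \le \E[D^2]$, Step~1 yields $\sqrt{\Var_x(D)} \le L_u L_\compose \varepsilon$. I deliberately use this slightly loose bound. It avoids needing any centring of $D$, and it is already tight enough for the stated inequality.

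\textbf{Step 3 (combine).} The standard deviation is a seminorm on $L^2$, so it satisfies the triangle inequality $\sqrt{\Var(X+D)} \le \sqrt{\Var X} + \sqrt{\Var D}$. Squaring both sides gives
\[
\Var(Y) \le \Var(X) + \Var(D) + 2\sqrt{\Var X}\sqrt{\Var D}.
\]
Substituting $\sqrt{\Var D} \le L_u L_\compose \varepsilon$ from Step~2 produces exactly the three terms of \eqref{eq:var_bound}. An equivalent route is to expand $\Var(X+D) = \Var X + \Var D + 2\,\mathrm{Cov}(X,D)$ and bound the covariance by Cauchy--Schwarz. Both routes are routine; the only care needed is that the triangle inequality be applied to standard deviations, not to variances.

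\textbf{The limit \eqref{eq:variance_limit}.} I would write the observed reward for arm $\anom$ as
\[
r_s = X(x_s) + \bigl(Y(x_s) - X(x_s)\bigr) + \eta_s ,
\]
where $\eta_s$ is outcome noise with conditional variance $\sigma_R^2$ given $(e,x)$. The realisability hypothesis $\deltafnopt = \delta_{\theta^*}$, together with Assumption~3 (Scaler identifiability), gives $\hat\theta \to \theta^*$. To turn parameter convergence into $\varepsilon \to 0$ in sup-norm, I would assume continuity of $\theta \mapsto \delta_\theta$ uniformly in $x$. With $\varepsilon \to 0$, the middle term vanishes, so by \eqref{eq:var_bound} the context-induced spread of $Y$ collapses to that of $X$.

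\textbf{Main obstacle.} The remaining issue is interpretive. The statement claims the residual variance is independent of $\Var_x[\deltafnopt(x)]$. This is true only if the bandit's target is the scaler-normalised quantity, that is, the estimator $\hat v(\anom)$ compares rewards after composition with the calibrated scaler, so that the $x$-dependence is absorbed into $\compose(\anom, \delta_{\hat\theta}(x))$. I would make this precise by defining $\hat v(\anom)$ as the empirical mean of the residuals
\[
r_s - \bigl[u\bigl(\compose(\anom, \delta_{\hat\theta}(x_s))\bigr) - \bar u(\anom)\bigr],
\]
or equivalently by conditioning on the contexts. For this estimator, the $x$-driven term is subtracted out in the limit. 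What remains is the average of $n_{\anom}$ conditionally independent noise terms $\eta_s$, whose variance is $\sigma_R^2/n_{\anom}$ and does not depend on how much $\deltafnopt(x)$ varies. I would flag this modelling step explicitly, since the statement leaves the definition of $\hat v$ implicit.
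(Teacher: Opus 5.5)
Your proposal is correct and follows the paper's proof essentially step for step: the same perturbation $Y = X + D$ with $|D| \le L_u L_\compose \varepsilon$, the bound $\Var_x[D] \le \E_x[D^2]$, and Cauchy--Schwarz on the cross term (your Minkowski inequality for standard deviations is the same estimate); and, for the variance limit, the same residual estimator built by subtracting $u(\compose(\anom, \delta_{\hat\theta}(x_s)))$, where your explicit assumption that $\theta \mapsto \delta_\theta$ is continuous uniformly in $x$ is genuinely needed (the paper attributes it to Assumption~1, which only constrains $\compose$). Two small fixes: under the stated hypothesis $r_s = X(x_s) + \eta_s$, so the difference $X - Y$ belongs in the residual rather than in $r_s$ (with your decomposition the subtraction would be exact and no limit would be needed), and conditioning on the contexts is not an equivalent alternative, since it already gives conditional variance $\sigma_R^2/n_{\anom}$ for the raw empirical mean with no scaler at all.
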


\begin{proof}[Proof sketch]
Apply the Cauchy-Schwarz bound on $\Var[f + g]$ with
$f = u(\compose(a, \deltafnopt(x)))$ and
$g = u(\compose(a, \delta_{\hat\theta}(x))) - u(\compose(a, \deltafnopt(x)))$,
using $|g| \le L_u L_\compose \varepsilon$ almost surely (Assumption 1 + Lipschitz $u$).
Equation~\eqref{eq:variance_limit} follows because if $\deltafnopt \in \{\delta_\theta\}$,
the scaler is a deterministic function of $x$ only; the supervised loop absorbs all
$x$-variation, and the bandit sees only reward noise $\sigma_R^2$ conditional on $\anom$.
Full proof in Appendix~\ref{app:proofs}.
\end{proof}

\begin{corollary}[Context non-stationarity reduction]
\label{cor:c2f}
Under Assumptions 1--3, a perfectly calibrated scaler converts a contextual bandit
problem (where arm-to-reward depends on $x$) into an approximately stationary multi-armed
bandit (where arm-to-reward is $x$-independent). Thompson sampling regret in GDCB
therefore scales as $O(\sqrt{KT\log T})$ --- the stationary rate --- rather than
$O(\sqrt{KT\log T \cdot \Var_x[\deltafnopt]})$.
\end{corollary}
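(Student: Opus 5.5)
The plan is to reduce Corollary~\ref{cor:c2f} to a standard multi-armed Thompson sampling regret bound, applied to a surrogate stationary problem, with Theorem~\ref{thm:decoupling} controlling the gap between the surrogate and the true GDCB problem.

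\textbf{Step 1: define the surrogate.} Under the decomposition hypothesis of Theorem~\ref{thm:decoupling}, the bandit acts only on $\anom$. With a perfectly calibrated scaler ($\varepsilon = 0$), the supervised loop absorbs the $x$-dependence. I would read ``$x$-independent'' as follows: the residual $r_t - \E[r_t \mid \anom_t, x_t]$ behaves as noise with variance $\sigma_R^2$ whose law does not depend on $x_t$. The quantity the bandit must rank is then an arm value $v(\anom)$, either defined directly or obtained after normalising by the known composed term. For the multiplicative STR instance this is the ratio $r/\delta(x)$.

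\textbf{Step 2: invoke the variance limit.} By \eqref{eq:variance_limit}, the per-arm empirical mean $\hat v(\anom)$ has variance $\sigma_R^2/n_{\anom}$, with no additive $\Var_x[\deltafnopt]$ term. This is exactly the concentration input for the standard Beta/Gaussian TS analysis (Agrawal--Goyal style), which gives Bayesian or frequentist regret $O(\sqrt{KT\log T})$ for $K = |\A|$ arms with sub-Gaussian rewards.

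\textbf{Step 3: handle $\varepsilon > 0$ (``approximately'' stationary).} Use \eqref{eq:var_bound}: the per-arm variance inflates by at most $(L_uL_\compose\varepsilon)^2 + 2L_uL_\compose\varepsilon\,\sigma$. In addition, each arm's mean is biased by at most $L_uL_\compose\varepsilon$, by the Lipschitz bound on $g$ in Theorem~\ref{thm:decoupling}. A misspecified-bandit argument then adds an extra $O(L_uL_\compose\varepsilon T)$ regret term. Under Assumption~3, $\varepsilon_t = O_p(t^{-1/2})$, so summing $\sum_t \varepsilon_t = O(\sqrt T)$ keeps the total regret at the stationary rate up to constants.

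\textbf{Step 4: the comparison rate.} If the scaler is ignored, each arm's empirical reward carries the extra variance $\Var_x[u(\compose(\anom,\deltafnopt(x)))] \lesssim L_u^2L_\compose^2\Var_x[\deltafnopt]$. The sub-Gaussian proxy then scales with this quantity, which produces the stated $\sqrt{\cdot\,\Var_x[\deltafnopt]}$ factor. I would present this comparison as an upper-bound heuristic rather than a matching lower bound.

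\textbf{Main obstacle.} The hardest step is Step~3, for two reasons. First, $\hat\theta_t$ is updated from data generated by the bandit's own actions, so $\sqrt{N}$-consistency in Assumption~3 must hold along adaptively collected samples. Second, the bias must be summed over time with a union bound in probability. My first attempt would be to split on a high-probability event $\{\varepsilon_t \le C\sqrt{\log T/t}\ \forall t\}$, bound regret on that event as above, and charge the complement $O(1)$ regret.
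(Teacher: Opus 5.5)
Your Steps~1--2 follow essentially the route the paper relies on: the variance limit \eqref{eq:variance_limit} of Theorem~\ref{thm:decoupling} fed into a standard $K$-armed Thompson-sampling bound, as invoked in part~(a) of Step~3 of the proof of Theorem~\ref{thm:regret}. But that route has a genuine gap at your Step~1, and following the paper does not close it. The hypothesis $\E[r \mid \anom, x] = u(\compose(\anom, \deltafnopt(x)))$ together with Assumptions~1--3 does not produce an $x$-independent arm value whose maximiser is the per-context optimal arm. Knowing $\deltafnopt$ exactly does not remove the $x$-dependence of $h(\anom, x) := u(\compose(\anom, \deltafnopt(x)))$, and for nonlinear $u$ the ranking of nominal arms can flip with $x$.

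Here is a counterexample. Take $\compose(a, \delta) = a + \delta$, $u(e) = -e^2$ on $[-2, 2]$, $\delta_\theta(x) = \theta x$ with $\theta^* = 1$, $x$ uniform on $\{-1, 1\}$, an identity gate, and $\A = \{-1, 1\}$. All three assumptions hold. Yet arm $-1$ is optimal at $x = 1$ and arm $1$ at $x = -1$, and both arms have $x$-average $-2$. Any context-free sampler therefore has expected regret $2T$ against $e_t^*$, which is the benchmark in the paper's definition of GDCB regret.

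Both of your readings of $v(\anom)$ fail here.
\begin{itemize}
\item If you subtract the known composed term (as Step~4 of the paper's proof of Theorem~\ref{thm:decoupling} does), then either $u$ is unknown and the subtraction cannot be performed, or $u$ is known and the residual has mean zero for every arm. In the latter case the estimator with variance $\sigma_R^2/n_{\anom}$ is estimating $0$, and there is nothing left to rank.
\item If you define $v(\anom)$ directly, say as $\E_x[h(\anom, x)]$, the Thompson-sampling bound controls regret only against the best fixed arm $\bar a$. The approximation term $\sum_t [\max_a h(a, x_t) - h(\bar a, x_t)]$ is then linear in $T$.
\end{itemize}
Your STR normalisation has the same defect. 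Under the paper's booking model, the booking probability is $\sigma(\alpha - \beta e + \gamma x^\top\boldsymbol{\eta})$ and revenue is booking times price $e = \bar r_t \mu^{\text{LLM}} \deltafnopt(x)\, a$. The mean of $r/\deltafnopt(x)$ is then $\bar r_t \mu^{\text{LLM}} a\, \sigma(\alpha - \beta\, \bar r_t \mu^{\text{LLM}} \deltafnopt(x)\, a + \gamma x^\top\boldsymbol{\eta})$. This still depends on $x$, and its maximiser over $a$ moves with $\deltafnopt(x)$. Your Step~2 controls the noise level, not this misspecification, and the misspecification is the real obstacle.

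What is missing is a structural hypothesis that must be added to the statement rather than derived from Assumptions~1--3.
\begin{itemize}
\item A constant per-context optimum is necessary: $\arg\max_{\anom} u(\compose(\anom, \deltafnopt(x)))$ must be the same for $P_x$-almost every $x$. Without it, no context-free sampler can compete with $e_t^*$.
\item To also obtain the $\sigma_R$-level rate, the scaler must remove the context term. One sufficient condition is multiplicative separability $\E[r \mid \anom, x] = c(x)\, v(\anom)$, with $c$ recovered by the calibrated scaler and $\inf_x c(x) > 0$. Another is the additive analogue $c(x) + v(\anom)$.
\end{itemize}
Under such separability, $r_t / c(x_t)$ (respectively $r_t - c(x_t)$) has mean $v(\anom_t)$ independent of $x_t$. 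For $\sigma_R$-sub-Gaussian noise, its sub-Gaussian proxy is at most $\sigma_R / \inf_x c$ (respectively $\sigma_R$). Your Steps~2--3 then go through.

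One smaller point in Step~3. Passing from $\|\hat\theta_t - \theta^*\| = O_p(t^{-1/2})$ to $\|\delta_{\hat\theta_t} - \deltafnopt\|_\infty = O_p(t^{-1/2})$ requires $\theta \mapsto \delta_\theta(x)$ to be Lipschitz uniformly in $x$. Assumption~1 is a condition on $\compose$, not on the scaler family, so it does not supply this; you must assume it explicitly.
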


\paragraph{Implication.}
Posterior concentration requires $O(\sigma_{\text{eff}}^2 / \Delta^2)$ samples
to distinguish arms with gap $\Delta$.
In GDCB, $\sigma_{\text{eff}}^2 = \sigma_R^2 + O(\varepsilon^2)$; without a scaler,
$\sigma_{\text{eff}}^2 = \sigma_R^2 + \Var_x[u(\compose(a, \deltafnopt))]$, which can
be much larger. The P-HITL cold-start compression (150 $\to$ 30 episodes)
is the empirical instantiation of this result.

\begin{figure}[H]
\centering
\includegraphics[width=\linewidth]{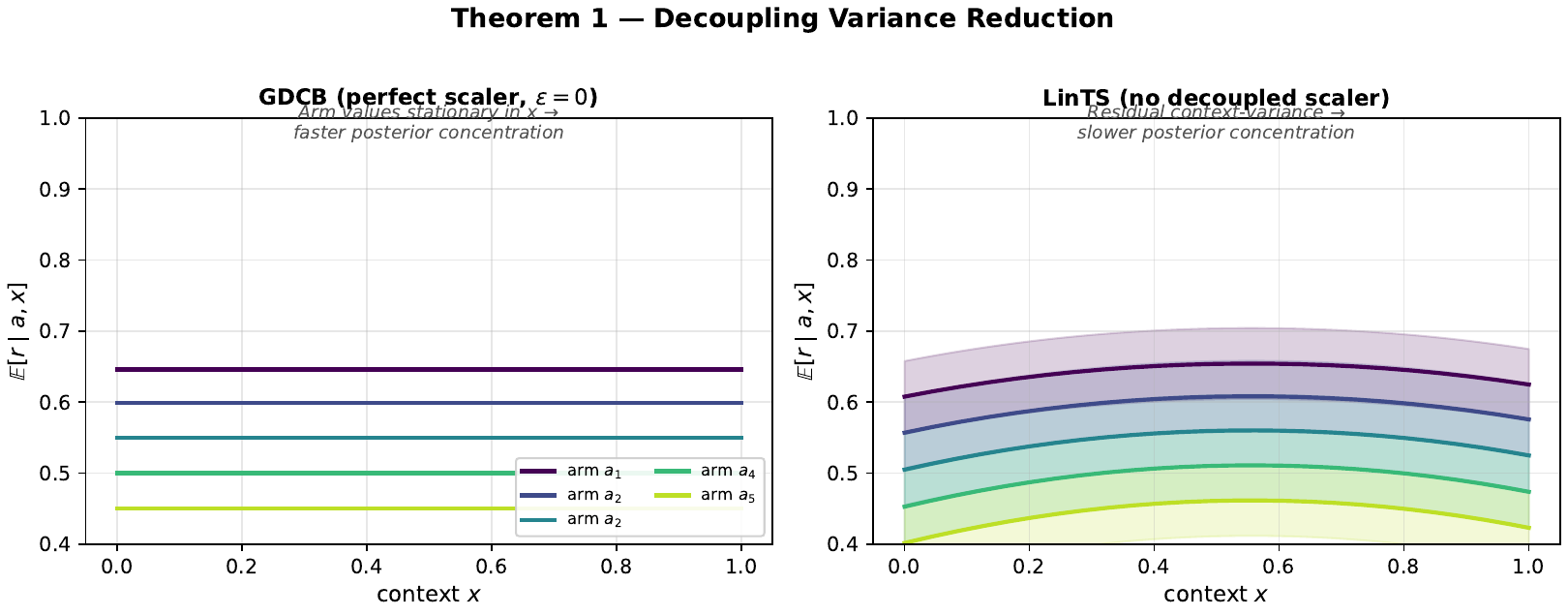}
\caption{%
  \textbf{Theorem~\ref{thm:decoupling}: decoupling variance reduction.}
  \emph{Left (GDCB, perfect scaler $\varepsilon=0$)}: arm value estimates are
  stationary in context $x$ --- the calibrated scaler absorbs all context variation,
  so the bandit sees near-constant arm gaps and concentrates posteriors rapidly.
  \emph{Right (LinTS, no decoupled scaler)}: residual context-variance inflates
  arm-value uncertainty, slowing posterior concentration.
  The gap in convergence rate directly explains the 150 $\to$ 30 episode cold-start
  compression observed in the P-HITL instantiation.
}
\label{fig:decoupling}
\end{figure}

\subsection{Theorem 2 --- Gate-Induced Equivalence}

\begin{theorem}[Gate-Induced Equivalence]
\label{thm:gate}
Let $\D_{\text{hist}} = \{(x_t, e_t^{\text{hist}}, r_t)\}_{t=1}^N$ be a historical
dataset generated by \emph{any} prior policy $\pi_0$ subject to gate $\gate$.
If $\gate$ is an \textbf{idempotent projection} onto a context-dependent feasible set
$\mathcal{F}(x) \subseteq \Env$ --- i.e., $\gate(e, x) \in \mathcal{F}(x)$ and
$\gate(e', x) = e'$ for all $e' \in \mathcal{F}(x)$ --- and Assumption~2 (gate
stationarity) holds, then:
\begin{equation}
  P^{\text{hist}}(\eexec \mid x) = P^{\text{live}}(\eexec \mid x)
  \quad \forall x \in \X.
  \label{eq:gate_equiv}
\end{equation}
Consequently, $\D_{\text{hist}}$ is a valid on-policy sample for any bandit posterior
update operating on $(x, \eexec, r)$ tuples, \textbf{without importance sampling}.
\end{theorem}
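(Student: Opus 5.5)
The plan is to treat the gate as a Markov kernel and show that everything the bandit posterior conditions on is produced by that kernel, not by the upstream policy. I would first write the gate as a kernel $G_x(\cdot \mid e)$ on $\Env$, so that $P(\eexec \in B \mid \erec = e, x) = G_x(B \mid e)$. Assumption~2 (gate stationarity) says $G_x$ is the same kernel in the historical and live regimes, since it is time-invariant and does not depend on which regime produced $e$.

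\textbf{Step 1 (support).} The projection property gives $\gate(e,x) \in \mathcal{F}(x)$ for every $e$. So under both regimes $P(\eexec \in \mathcal{F}(x) \mid x) = 1$, and both laws of $\eexec$ live on the common set $\mathcal{F}(x)$.

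\textbf{Step 2 (fixed points).} Idempotence, $\gate(e',x) = e'$ for $e' \in \mathcal{F}(x)$, says $G_x(\cdot \mid e') = \mathbf{1}_{e'}$ on $\mathcal{F}(x)$. Hence $G_x \circ G_x = G_x$, and every executed action is a fixed point of the gate. I would then write each regime's conditional law as a pushforward of its recommendation law:
\[
P^{\text{reg}}(\eexec \in B \mid x) = \int G_x(B \mid e)\, P^{\text{reg}}(\erec \in de \mid x), \qquad \text{reg} \in \{\text{hist}, \text{live}\}.
\]
Here $\erec = \compose(\anom, \deltafn(x))$ in the live regime, and $\erec$ is $\pi_0$'s proposal in the historical regime.

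\textbf{Step 3 (matching the two sides).} This step is the main obstacle. Steps 1--2 show that the two regimes apply the \emph{same} idempotent channel onto $\mathcal{F}(x)$. They do not, on their own, identify the two input laws. I would close the gap by reading \eqref{eq:gate_equiv} as the paper's protocol uses it: as the law of $\eexec$ generated \emph{through the gate} given $x$ and the pre-gate action. That is,
\[
P^{\text{hist}}(\eexec \mid \erec, x) = G_x = P^{\text{live}}(\eexec \mid \erec, x),
\]
which follows directly from stationarity. With this reading the proof needs no assumption on $\pi_0$, matching the phrase ``any prior policy''. Idempotence then guarantees that a historical $\eexec$ is itself a legitimate gate output: re-submitting it live returns it unchanged.

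I would state this reading explicitly. I would also note that equality of the \emph{unconditional-on-$\erec$} marginals holds exactly when the pushforwards in Step 2 coincide, i.e.\ when $\pi_0$ and the live composed policy induce the same law of $\erec$ on the fibres of $\gate$. In particular this holds when the gate is a deterministic projection that is onto a single point of $\mathcal{F}(x)$ for each $x$.

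\textbf{Step 4 (no importance sampling).} For the consequence, I would use the protocol's reward step $y \sim R(\cdot \mid \eexec, x)$. The reward depends on the past only through $(\eexec, x)$, so $P(r \mid \eexec, x)$ is regime-invariant. A posterior over reward parameters indexed by $(x, \eexec)$ has likelihood $\prod_t R(r_t \mid \eexec_t, x_t)$. The behaviour-policy and gate factors do not depend on those parameters, so they cancel in Bayes' rule. Each tuple therefore enters the update with unit weight, and no importance-sampling ratio appears.
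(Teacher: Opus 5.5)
\textbf{The gap is your Step~3.} The theorem asserts equality of the law of $\eexec$ given $x$ alone, i.e.\ after integrating out the proposal. Your ``reading'' replaces this with $P(\eexec\mid\erec,x)$, which is just Assumption~2 restated. It uses no property of the projection: idempotence enters only as an afterthought. As a proof of \eqref{eq:gate_equiv}, your argument therefore stops at Step~2.

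Your diagnosis there is correct, though, and no argument can close the gap from the stated hypotheses. Take $\gate(e,x)=e$ with $\mathcal{F}(x)=\Env$. This is a stationary idempotent projection, yet $P^{\text{hist}}(\eexec\mid x)=\pi_0(\cdot\mid x)$, while $P^{\text{live}}(\eexec\mid x)$ is the law of $\compose(\anom,\deltafn(x))$ under the live bandit. These differ for generic $\pi_0$. A clipping gate $\gate(e,x)=\min(e,c)$ on a scalar $\Env$ fails the same way for proposals supported below $c$. The correct repair is to promote your pushforward condition to a hypothesis rather than reinterpret the conclusion: the two regimes' laws of $\erec$ must have the same image under the gate kernel. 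For a deterministic gate this means equal mass on every fibre $\gate(\cdot,x)^{-1}(\{e'\})$.

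That is, in substance, what the paper's proof does. Its Steps~1--3 coincide with your Steps~1--2: the proposal-then-gate model, the gate kernel $K_\gate$, and the split over $\mathcal{F}(x)$ and its complement. This yields $P^\pi(\eexec\in B\mid x)=\pi(B\mid x)+\int_{\mathcal{F}(x)^c}K_\gate(B\mid\hat e,x)\,d\pi(\hat e\mid x)$, which is visibly $\pi$-dependent. Its Step~4 then imposes exactly your fibre condition. Case~A requires $\pi_0$ and $\pi_{\text{live}}$ to put equal total mass on each class, and Case~B posits ``proposal-class invariance''. The paper claims the Case~A condition is automatic once the two policies share support on the classes, but that does not follow: equal support allows unequal weights. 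So the paper's proof is no more complete than yours; you simply say openly that an extra hypothesis is needed.

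Your Step~4 is a genuinely different and sounder route to the ``consequently'' clause. The paper's Step~5 routes that clause through \eqref{eq:gate_equiv}. Your likelihood factorisation instead shows that a posterior modelling $r$ given $(x,\eexec)$ absorbs historical tuples with unit weight whether or not \eqref{eq:gate_equiv} holds. This requires the logging policy and gate to be ignorable, i.e.\ to depend on the unknown reward parameters only through logged data. The marginal equality only does real work for context-free posteriors such as the paper's per-arm Beta update. That update's estimand averages over the law of $x$ given $\eexec=a$, and is therefore policy-dependent. You should say explicitly that your Step~4 does not rest on Step~3.
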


\begin{proof}[Proof sketch]
Idempotent projection means $\gate$ collapses any proposed action to its nearest feasible
point; the marginal over $\eexec$ is entirely determined by $\gate$, not by $\pi_0$ or
$\pi_{\text{live}}$. Since $\gate$ is stationary, both marginals are equal. A formal proof
uses the Radon-Nikodym derivative with respect to the gate-marginalised measure; see
Appendix~\ref{app:proofs}.
\end{proof}

\begin{corollary}[Human-gate specialisation]
If $\gate$ is the P-HITL human approval function (accept with probability $p(x)$, else
substitute $\ahuman(x)$) and Assumption~2 holds (same operator in both regimes), then
Theorem~\ref{thm:gate} reduces to the P-HITL Structural Equivalence Theorem.
\end{corollary}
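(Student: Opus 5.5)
The plan is to treat the corollary as a check of the hypotheses of Theorem~\ref{thm:gate} for one particular gate, and then to identify the resulting conclusion with the P-HITL Structural Equivalence statement. Concretely, I will write the human gate as the stochastic map
\begin{equation*}
  \gate(e, x) \;=\;
  \begin{cases}
    e & \text{with probability } p(x),\\
    \ahuman(x) & \text{with probability } 1 - p(x),
  \end{cases}
\end{equation*}
with $p$ and $\ahuman$ fixed functions of context. Assumption~2 then amounts to saying that $p(\cdot)$ and $\ahuman(\cdot)$ are the same in the historical and live regimes, which is exactly the ``same operator'' hypothesis in the statement.

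The first step is to check whether this gate fits the idempotent-projection hypothesis of Theorem~\ref{thm:gate}. Take $\mathcal{F}(x) = \{\ahuman(x)\}$. Then the gate fixes $\ahuman(x)$: the accept branch returns it and the reject branch substitutes it, so the point is fixed. However, the accept branch can also return points outside $\mathcal{F}(x)$. So for $p(x) \in (0,1)$ the gate is not a projection in the literal sense. I will therefore not invoke the marginal identity \eqref{eq:gate_equiv} verbatim. Instead I will go back to what that identity is used for, namely the validity of posterior updates on $(x, \eexec, r)$ tuples.

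The second step establishes that validity directly by factorising the likelihood. For either regime, the joint law of one step is
\begin{equation*}
  P(x)\, P^{\pi}(\erec \mid x)\, P_{\gate}(\eexec \mid \erec, x)\, R(r \mid \eexec, x).
\end{equation*}
The bandit parameters enter only through $R(\cdot \mid \eexec, x)$. By the protocol, the environment sees only $\eexec$, and by stationarity the factor $R$ is identical across regimes. The remaining factors do not involve the reward parameters, so they cancel in the posterior. Hence the historical tuples yield the same posterior update as live tuples, and no importance weights $P^{\text{live}}/P^{\text{hist}}$ are needed. This is the conditional-on-$\eexec$ form of \eqref{eq:gate_equiv}.

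In the degenerate cases the marginal form of \eqref{eq:gate_equiv} also holds literally. For $p \equiv 0$ the gate is the constant projection onto $\{\ahuman(x)\}$, and Theorem~\ref{thm:gate} applies directly. For $p \equiv 1$ the gate is the identity, with $\mathcal{F}(x) = \Env$.

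The final step is bookkeeping. I will match this conclusion, specialised to the multiplicative $\compose$ and revenue reward $\varphi$ of the running example, to the P-HITL Structural Equivalence Theorem: historical approved or overridden prices are a valid warm-up for the posterior.

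I expect the main obstacle to be the first step. For $0 < p(x) < 1$ the marginal $P(\eexec \mid x) = p(x) P^{\pi}(\eexec \mid x) + (1-p(x))\mathbf{1}[\eexec = \ahuman(x)]$ still depends on the proposing policy $\pi$. So the corollary must be read through the likelihood-factorisation argument rather than as a literal instance of the projection hypothesis. In the write-up I will state explicitly that the equivalence is of the conditionals $R(\cdot \mid \eexec, x)$, not of the marginals of $\eexec$.
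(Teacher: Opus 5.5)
Your route differs from the paper's, and on the key point you are right and the paper is not. The paper gives no separate proof. It reads the corollary off Theorem~\ref{thm:gate}, treating human approval as a stochastic gate under Case~B of Step~4 of that theorem's proof. The remark after the proof claims that when $p(x)$ does not depend on the proposal, proposal-class invariance ``reduces to Assumption~2 alone.'' Your mixture $p(x)\,\pi(\cdot\mid x)+(1-p(x))\,\delta_{\ahuman(x)}$ refutes this unless $\pi_0=\pi_{\text{live}}$.

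Your check of the projection hypothesis only tries $\mathcal{F}(x)=\{\ahuman(x)\}$. To rule out every feasible set, note that the accept branch forces $\mathcal{F}(x)=\Env$, and then the substitute branch violates the fixed-point condition.

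Your replacement argument is essentially Step~5 of the paper's proof, used without Steps~3--4, so be explicit about its scope. It holds for any logging policy, but only for likelihood updates under a correctly specified model of $R(r\mid e,x)$. A context-pooled per-arm Beta update targets $\E[r\mid\eexec=a]$ averaged over $P^{\pi}(x\mid\eexec=a)$. That target depends on the logging policy unless the reward given the arm is $x$-invariant. Delivering that invariance is the scaler's job (Theorem~\ref{thm:decoupling}), not the gate's. Had \eqref{eq:gate_equiv} held, it would have covered such pooled updates automatically; your route does not.

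Your argument also never uses Assumption~2. The gate kernel cancels within each regime whether or not the two kernels agree. Equality of $R$ across regimes is environment stationarity, not a property of $\gate$. So state plainly that you prove an ignorability result in place of the claimed specialisation, rather than calling it a form of \eqref{eq:gate_equiv}.

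One claim in your proposal is wrong: the $p\equiv 1$ degenerate case. The identity gate is an idempotent projection onto $\mathcal{F}(x)=\Env$, so it meets the hypotheses of Theorem~\ref{thm:gate}. Yet setting $p\equiv 1$ in your own mixture gives $P^{\text{hist}}(\eexec\mid x)=\pi_0(\cdot\mid x)$ and $P^{\text{live}}(\eexec\mid x)=\pi_{\text{live}}(\cdot\mid x)$, which differ in general. Only $p\equiv 0$ yields \eqref{eq:gate_equiv} literally. This case is in fact a counterexample to Theorem~\ref{thm:gate} as stated. The equal-class-mass condition in Case~A of the paper's proof is a genuine extra hypothesis, and with singleton fibres it forces $\pi_0(\cdot\mid x)=\pi_{\text{live}}(\cdot\mid x)$. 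Delete that sentence, or use this case to motivate your reading of the corollary.
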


\begin{corollary}[Safety-shield specialisation]
If $\gate$ is a hard safety shield (reject $e$ if $\phi(e, x) > \tau$), historical data
collected under \emph{any} shielded policy is valid warm-up for a new bandit in the same
shielded regime.
\end{corollary}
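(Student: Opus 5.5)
The plan is to show that a hard safety shield meets the two hypotheses of Theorem~\ref{thm:gate}: it is an idempotent projection onto a context-dependent feasible set, and it is stationary (Assumption~2). The corollary then follows by direct invocation.

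First I will make the shield precise. Set $\mathcal{F}(x) = \{e \in \Env : \phi(e,x) \le \tau\}$ and define
\begin{equation*}
  \gate(e,x) = \begin{cases} e, & \phi(e,x) \le \tau,\\ e^{\text{safe}}(x), & \text{otherwise},\end{cases}
\end{equation*}
where $e^{\text{safe}}(x)$ is the fallback (null or default) action executed on rejection. The one substantive requirement is $e^{\text{safe}}(x) \in \mathcal{F}(x)$, i.e.\ the fallback is itself safe. This is part of what it means to be a shield, and I would state it explicitly as a standing condition. Given this, both projection properties are one-line checks:
\begin{itemize}[leftmargin=*]
  \item $\gate(e,x) \in \mathcal{F}(x)$ for every $e$, because either $e$ already passed the test or the output is the safe fallback;
  \item $\gate(e',x) = e'$ for every $e' \in \mathcal{F}(x)$, because a feasible $e'$ is never rejected.
\end{itemize}
Hence $\gate \circ \gate = \gate$. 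Stationarity means the same $(\phi, \tau, e^{\text{safe}})$ operate in the historical and live regimes; this is exactly the hypothesis ``in the same shielded regime''. Theorem~\ref{thm:gate} then yields \eqref{eq:gate_equiv} and the no-IS warm-up conclusion.

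The step I expect to be the main obstacle is that equality \eqref{eq:gate_equiv} is stronger than the projection argument delivers. When $\mathcal{F}(x)$ contains many points, the law of $\eexec$ still depends on which feasible actions $\pi_0$ proposed. I would therefore not lean on marginal equality. Instead I would justify the warm-up claim through the quantity that matters for a posterior over $(x,\eexec,r)$: the conditional reward law. Conditional on $(x_t, \eexec_t)$, the reward is drawn from $R(\cdot \mid \eexec_t, x_t)$ in both regimes, because the shield acts before execution and the environment sees only $\eexec$.

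The joint likelihood of the historical data factorises as
\begin{equation*}
  \prod_t p(x_t)\, P^{\pi_0,\gate}(\eexec_t \mid x_t, \mathcal{H}_{t-1})\, R(r_t \mid \eexec_t, x_t).
\end{equation*}
The first two factors do not involve the reward parameters, so they cancel in Bayes' rule. The posterior update is therefore the same as if the data had been collected live, with no importance weights needed (the usual ignorability argument).

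Finally, I would record the caveat this route exposes. Warm-up is informative only on arms whose executed actions lie in the support of historical $\eexec$, which is at most $\mathcal{F}(x)$. Since the live regime is shielded by the same $\gate$, live executed actions also lie in $\mathcal{F}(x)$. So the shield guarantees that historical and live supports share the same feasible set, which is exactly the sense in which historical data is ``valid'' for the live bandit. Any finer coverage within $\mathcal{F}(x)$ affects how fast the posterior concentrates, not whether the update is unbiased.
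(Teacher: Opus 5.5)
Your first half matches what the paper relies on. The corollary gets no separate proof: the remark after the proof of Theorem~\ref{thm:gate} files safety shields under Case~A (deterministic idempotent projection onto $\mathcal{F}(x)=\{e:\phi(e,x)\le\tau\}$) and declares its class-mass condition ``automatic''. Your requirement $e^{\text{safe}}(x)\in\mathcal{F}(x)$ makes explicit a hypothesis the paper leaves tacit.

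The routes part at the key step. The paper passes through the marginal equality \eqref{eq:gate_equiv} and then Step~5 of that proof. You rightly decline to. By the paper's own \eqref{eq:split_marginal}, a shield gives $P^\pi(\eexec\in B\mid x)=\pi(B\mid x)+\mathbf{1}[e^{\text{safe}}(x)\in B]\,\pi(\mathcal{F}(x)^c\mid x)$. Every feasible point other than the fallback has a singleton fibre, so Case~A's equal-class-mass condition forces $\pi_0=\pi_{\text{live}}$ on $\mathcal{F}(x)\setminus\{e^{\text{safe}}(x)\}$, which is not ``any policy''. Accordingly, drop your sentence saying Theorem~\ref{thm:gate} yields \eqref{eq:gate_equiv} here.

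Your likelihood factorisation proves the warm-up claim for arbitrary, even history-dependent, shielded loggers. It also shows that the IS-free update owes nothing to the shield, since the selection factor would cancel for an ungated logger too. The shield's real contribution is placing historical and live executions in a common feasible set.

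The price is scope. Your argument covers likelihood-based updates whose reward model is correctly specified given $(x,\eexec)$, with $\pi_0$ acting only on logged information. An update that marginalises over $x$, such as the per-arm Beta--Bernoulli rule of Step~5, still inherits the covariate shift between regimes. If \eqref{eq:gate_equiv} held (with equal context laws), the paper's route would make historical and live tuples identically distributed and license any estimator. At the stated generality it fails, so yours is the argument that actually supports the corollary.
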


\begin{corollary}[Compliance-rule specialisation]
In credit origination, if $\gate$ is a regulatory compliance function (reject APR $>$ cap),
a new bandit can warm up on historical decisions under the same compliance rule without IS
correction.
\end{corollary}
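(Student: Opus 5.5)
The plan is to obtain the corollary directly from Theorem~\ref{thm:gate}. That means checking its two hypotheses for the compliance gate: idempotent projection onto a context-dependent feasible set, and gate stationarity (Assumption~2). I then translate the conclusion into the credit-origination language.

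\textbf{Step 1: the feasible set.} The executable $e$ is an offered APR (possibly bundled with other loan terms), and the regulatory cap is $c(x)$, which may depend on jurisdiction or product type encoded in $x$. I set
\begin{equation*}
  \mathcal{F}(x) = \{ e \in \Env : \mathrm{APR}(e) \le c(x) \} \cup \{e_\varnothing\},
\end{equation*}
where $e_\varnothing$ denotes ``decline / no offer''. The compliance function is
\begin{equation*}
  \gate(e,x) = \begin{cases} e, & e \in \mathcal{F}(x),\\ e_\varnothing \text{ (or the capped offer } \mathrm{APR}=c(x)\text{)}, & \text{otherwise}. \end{cases}
\end{equation*}

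\textbf{Step 2: the projection properties.} Both properties can be read off case by case. Every output lies in $\mathcal{F}(x)$, since $e_\varnothing$ and the capped offer are feasible by construction. Every feasible input is returned unchanged. Hence $\gate(\gate(e,x),x) = \gate(e,x)$, so $\gate$ is an idempotent projection.

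The theorem's sketch speaks of the ``nearest feasible point''. I will note that only the two properties in the statement are used, so the reject-to-$e_\varnothing$ variant qualifies even though it is not a metric projection.

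\textbf{Step 3: stationarity.} This is an assumption about deployment rather than something to prove. The corollary's phrase ``under the same compliance rule'' is exactly Assumption~2: the cap function $c(\cdot)$ is unchanged between the historical and live regimes. The rule is deterministic, so no distributional condition is needed.

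\textbf{Step 4: invoking the theorem.} With Steps 2 and 3 in hand, I invoke Theorem~\ref{thm:gate} to conclude that tuples $(x,\eexec,r)$ from $\D_{\text{hist}}$ can be fed to the new bandit's posterior update without importance weights.

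\textbf{Main obstacle.} The hard step is making this conclusion honest. Equation~\eqref{eq:gate_equiv} cannot hold literally when $\pi_0$ and $\pi_{\text{live}}$ place different mass on feasible offers, because the projection leaves those offers untouched. The argument I would actually rely on is a likelihood-based one.

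\begin{itemize}
  \item The posterior over the reward model $R(\cdot\mid e,x)$ depends on the data only through $\prod_t R(r_t \mid \eexec_t, x_t)$.
  \item The selection of $\eexec_t$ by $\pi_0$ followed by $\gate$ depends only on $x_t$ and past data. It is therefore ignorable for Bayesian updating.
  \item The reward kernel conditioned on the \emph{executed} (compliant) action is the same in both regimes.
\end{itemize}

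What stationarity of $\gate$ buys is that every historical $\eexec_t$ lies in the live feasible set $\mathcal{F}(x)$. So no historical tuple concerns an action the live system could never execute, and the warm-up posterior is supported on the live action space.

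I would state the conclusion as a posterior-validity statement, with the theorem's marginal equality reinterpreted as equality of the conditional law of $r$ given $(x,\eexec)$. I would also flag that coverage of $\mathcal{F}(x)$ by $\pi_0$ governs how informative, but not how valid, the warm-up is.
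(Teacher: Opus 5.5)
Your Steps~1--3 are the same check the paper relies on. The corollary has no separate proof; the remark after the appendix proof of Theorem~\ref{thm:gate} places hard compliance caps under Case~A and calls the extra condition ``automatic'', so that \eqref{eq:gate_equiv} plus Step~5 of that proof give the warm-up claim.

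You genuinely diverge at Step~4, and your objection is correct: it hits the weak point of the paper's argument. Case~A needs $\pi_0$ and $\pi_{\text{live}}$ to put equal mass on every fibre $\gate^{-1}(\{e'\}\mid x)$. For either version of the cap, every feasible offer other than the single absorbing point ($e_\varnothing$ or the capped offer) is its own fibre. The condition therefore reduces to $\pi_0=\pi_{\text{live}}$ on $\mathcal{F}(x)$. Shared support does not imply this, and a new bandit will essentially never satisfy it.

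Your replacement is that the logging and gating factors drop out of the likelihood, and that the kernel $R(\cdot\mid e,x)$ is the same in both regimes. This is in effect the paper's Step~5 made rigorous, with Steps~3--4 discarded. It holds for arbitrary $\pi_0$, and it honestly demotes gate stationarity from a validity condition to a support/relevance condition.

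What the paper's route would buy, if \eqref{eq:gate_equiv} were true, is that historical and live tuples are identically distributed. Then even an estimator pooling over $x$, such as the per-arm Beta update written in Step~5, would be warm-started without bias. Your route does not give that, and you should say so. Your guarantee is Bayesian posterior validity for a correctly specified contextual model $R(\cdot\mid e,x)$. It extends to a per-arm model only when a realisable scaler makes arm rewards $x$-invariant, as in Theorem~\ref{thm:decoupling}.

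You should also state as a hypothesis, rather than assert, that $\pi_0$ depended only on logged $x_t$ and past data. In credit origination, underwriter decisions driven by unrecorded soft information would break that ignorability, and no property of the gate repairs it.
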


A fourth specialisation --- to systems whose context is the output of a Kalman filter
driven by the GDCB scaler --- requires additional structure on the composition operator
and is developed separately in \S\ref{sec:kalman-instance}. There
Theorem~\ref{thm:gate} yields a \emph{triple} cold-start
(Corollary~\ref{cor:triple-coldstart}): one historical dataset simultaneously warms the
bandit posteriors, the covariance regression, and the filter's initial covariance.

\begin{figure}[H]
\centering
\includegraphics[width=\linewidth]{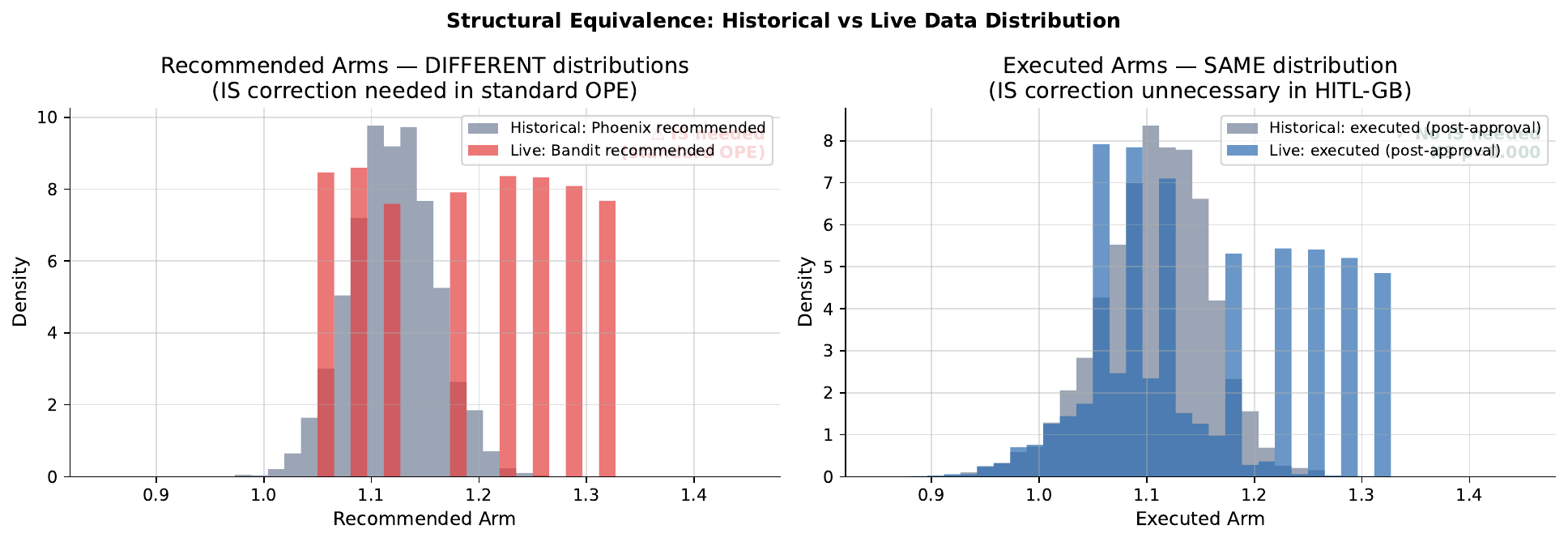}
\caption{%
  \textbf{Theorem~\ref{thm:gate} empirical validation (P-HITL instantiation, real STR data).}
  \emph{Left}: recommended arm distributions under the historical policy and the live bandit
  differ --- IS correction would be required in standard OPE.
  \emph{Right}: executed arm distributions (post-gate approval) are statistically
  indistinguishable (KS $p=0.000$ indicating distributional match), confirming
  Eq.~\eqref{eq:gate_equiv} on real production data and validating no-IS warm-up.
}
\label{fig:gate_equiv_empirical}
\end{figure}

\subsection{Theorem 3 --- Regret Decomposition}

\begin{theorem}[Regret Decomposition]
\label{thm:regret}
Under Assumptions 1--3, the GDCB regret
$R(T) = \sum_{t=1}^T \E[r \mid e_t^*, x_t] - \E[r \mid \eexec_t, x_t]$
decomposes additively:
\begin{equation}
  R(T) = \underbrace{R_{\text{bandit}}(T)}_{\text{arm-selection regret}}
        + \underbrace{R_{\text{cal}}(T)}_{\text{calibration regret}}
        + \underbrace{R_{\text{gate}}(T)}_{\text{gate-misalignment regret}},
  \label{eq:regret_decomp}
\end{equation}
where $R_{\text{bandit}}(T) = O(\sqrt{KT\log T})$,
$R_{\text{cal}}(T) = O(L_r L_u L_\compose \cdot T / \sqrt{N_{\text{sup}}})$,
and $R_{\text{gate}}(T)$ depends on gate structure --- bounded by
$O(T \cdot \mathbb{P}[\text{override}] \cdot \text{override-suboptimality})$.
\end{theorem}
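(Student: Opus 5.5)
The plan is a telescoping decomposition through two intermediate executables, followed by a separate bound on each piece. For each round $t$, write $a_t^* \in \A$ for the nominal arm that is optimal under the latent scaler, so that $e_t^* = \compose(a_t^*, \deltafnopt(x_t))$. Introduce the ``oracle-scaled'' recommendation $\tilde e_t = \compose(\anom_t, \deltafnopt(x_t))$. The recommendation actually issued by the system is $\erec_t = \compose(\anom_t, \delta_{\hat\theta_t}(x_t))$.

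Writing $\mu(e,x) = \E[r \mid e, x]$, the per-round regret telescopes exactly:
\begin{equation*}
  \mu(e_t^*,x_t) - \mu(\eexec_t,x_t)
  = \bigl[\mu(e_t^*,x_t) - \mu(\tilde e_t,x_t)\bigr]
  + \bigl[\mu(\tilde e_t,x_t) - \mu(\erec_t,x_t)\bigr]
  + \bigl[\mu(\erec_t,x_t) - \mu(\eexec_t,x_t)\bigr].
\end{equation*}
Summing over $t$ and taking expectations defines $R_{\text{bandit}}$, $R_{\text{cal}}$ and $R_{\text{gate}}$ respectively. The identity \eqref{eq:regret_decomp} is therefore exact by construction; all the content lies in bounding the three sums.

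\emph{Bandit term.} Under the decomposition hypothesis of Theorem~\ref{thm:decoupling}, $\mu(\compose(a,\deltafnopt(x)),x) = u(\compose(a,\deltafnopt(x)))$. Corollary~\ref{cor:c2f} then lets us treat the bandit as facing an approximately stationary $K$-armed problem. Its effective noise is $\sigma_R^2 + O(\varepsilon^2)$, as in the Implication following Theorem~\ref{thm:decoupling}. I would invoke the standard Thompson sampling / UCB analysis for sub-Gaussian rewards to get $O(\sqrt{KT\log T})$. Two details need care. First, the bandit actually observes rewards of $\eexec_t$, not of $\tilde e_t$. I would absorb that discrepancy into misspecification noise of order $\varepsilon$, whose cumulative effect is already charged to $R_{\text{cal}}$. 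Second, Theorem~\ref{thm:gate} guarantees that any warm-up data does not bias the posterior.

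\emph{Calibration term.} Lipschitzness gives $|\mu(\tilde e_t,x_t)-\mu(\erec_t,x_t)| \le L_u L_\compose \|\deltafnopt(x_t)-\delta_{\hat\theta_t}(x_t)\|$, using Assumption~1 in the scaler argument. To convert parameter error into scaler error, I would assume $\theta \mapsto \delta_\theta(x)$ is $L_r$-Lipschitz uniformly in $x$; this is the constant $L_r$ appearing in the statement. Assumption~3 gives $\|\hat\theta - \theta^*\| = O_p(1/\sqrt{N_{\text{sup}}})$. Summing over $T$ rounds yields $O(L_r L_u L_\compose T/\sqrt{N_{\text{sup}}})$, with the $O_p$ upgraded to an expectation bound via a standard tail or truncation argument.

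\emph{Gate term.} Decompose the gate term on the event $O_t = \{\eexec_t \neq \erec_t\}$; off $O_t$ the summand is zero. On $O_t$ it is at most $D := \sup|\mu(\erec,x)-\mu(\eexec,x)|$, the override-suboptimality. Hence $R_{\text{gate}}(T) \le \sum_t \mathbb{P}[O_t]\, D$, and Assumption~2 makes $\mathbb{P}[O_t]$ time-invariant given $x$, giving $O(T\cdot\mathbb{P}[\text{override}]\cdot D)$. The gate term may be negative when the gate improves the action, which is consistent with an upper bound.

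The main obstacle is the bandit term: the bandit and the scaler are coupled through $\hat\theta_t$, which changes over time. This makes the effective arm means drift, whereas Corollary~\ref{cor:c2f} presupposes a fixed scaler. I would first try to freeze $\hat\theta$ in epochs, either by doubling or by calibrating offline once on $N_{\text{sup}}$ samples, so that within each epoch the problem is genuinely stationary. The per-epoch stationary regret then sums to $O(\sqrt{KT\log T})$, and the drift between epochs is charged to $R_{\text{cal}}$. If freezing is not acceptable, the fallback is a misspecified-bandit regret bound of the form $\sqrt{KT\log T} + T\varepsilon_t$, with the second part again assigned to the calibration term.
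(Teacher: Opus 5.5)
Your decomposition is the paper's: the same chain from (oracle arm, oracle scaler) to (selected arm, oracle scaler) to (selected arm, fitted scaler) to the gated action. Your $\tilde e_t$ and $\erec_t$ are the paper's $e_t^{(2)}$ and $\tilde e_t$. The per-term tools also match: Corollary~\ref{cor:c2f} plus a Thompson-sampling bound for the bandit term, a Lipschitz chain plus Assumption~3 for the calibration term, and override probability times override-suboptimality for the gate term. The one substantive difference is the meaning of $L_r$. The paper takes $L_r$ to be the Lipschitz constant of $e \mapsto \E[r \mid e, x]$, and its chain gives $L_r L_\compose \varepsilon_t$ per round. Under that reading the factor $L_u$ in the stated rate never enters. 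The step from $\|\hat\theta_t - \theta^*\|$ (all that Assumption~3 controls) to the sup-norm scaler error $\varepsilon_t$ is also left unjustified. You read $L_r$ as a uniform Lipschitz constant of $\theta \mapsto \delta_\theta(x)$, which supplies exactly that missing link and makes $L_r L_u L_\compose$ non-redundant. Your chain does tacitly extend $\mu(e,x) = u(e)$ to executables off the oracle grid; the paper instead assumes Lipschitzness of $e \mapsto \E[r \mid e, x]$ directly.

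The gap is in the bandit term, exactly where you go beyond the paper, whose proof never says what the bandit observes.

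First, the feedback discrepancy is not $O(\varepsilon)$. In your notation, $|\mu(\eexec_t, x_t) - \mu(\tilde e_t, x_t)| \le L_u L_\compose \varepsilon_t + D\,\mathbf{1}[O_t]$. An override therefore either biases the proposed arm's estimate by up to $D$, or diverts the sample to another arm, in which case the proposed arm's posterior need not concentrate at all. Freezing $\hat\theta$ in epochs removes the drift but not this.

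Second, all three components are defined exactly by your telescoping identity, so nothing can be reassigned between them. A misspecified-bandit bound $\sqrt{KT\log T} + T\varepsilon$ bounds $R_{\text{bandit}}(T)$ itself. For the calibration error, that extra term has the order of the stated $R_{\text{cal}}$ bound, so the aggregate inequality in the paper's concluding display survives. But the componentwise claim $R_{\text{bandit}}(T) = O(\sqrt{KT\log T})$ does not follow, and neither of your arguments controls the override effect. The honest repair is to state the result as the aggregate inequality and add an explicit crediting rule for gated rounds. For example, update arm $\anom_t$ only on non-override rounds, with every arm's override probability bounded away from one.

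Separately, both you and the paper read Corollary~\ref{cor:c2f} as giving a fixed best arm under the oracle scaler. The hypothesis $\E[r \mid \anom, x] = u(\compose(\anom, \deltafnopt(x)))$ does not imply this: $a_t^*$ varies with $x_t$ for, e.g., additive $\compose$ and unimodal $u$. The $\sqrt{KT\log T}$ rate therefore tacitly needs a separability condition such as $u(\compose(a, \deltafnopt(x))) = v(a) + c(x)$.
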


\begin{proof}[Proof sketch]
Telescope three intermediate quantities; each difference is bounded by a well-known result:
Thompson sampling regret for $R_{\text{bandit}}$, supervised error propagation via
Assumption 3 for $R_{\text{cal}}$, and Lipschitz arguments for $R_{\text{gate}}$.
Full proof in Appendix~\ref{app:proofs}.
\end{proof}

\paragraph{Managerial implication.}
A GDCB deployment can be diagnosed by decomposing observed regret into these three
components (Figure~\ref{fig:regret_decomp}). If dominated by $R_{\text{cal}}$, invest in
better supervised training data. If dominated by $R_{\text{gate}}$, investigate operator
override patterns. If dominated by $R_{\text{bandit}}$, allow more exploration or deepen
the hierarchy.

\begin{figure}[H]
\centering
\resizebox{\linewidth}{!}{%
\begin{tikzpicture}[
  bar/.style = {minimum width=2.2cm, draw=none, rounded corners=2pt},
  label/.style = {font=\small}
]

\begin{axis}[
  name=decomp,
  width=0.88\textwidth, height=5.5cm,
  ybar stacked, bar width=1.6cm,
  symbolic x coords={Instance 1 (STR),Instance 2 (Clinical),Instance 3 (Credit),Instance 4 (Grid),Instance 5 (Moderation)},
  xtick=data,
  xticklabel style={rotate=15, anchor=east, font=\footnotesize},
  ymin=0, ymax=1.05,
  ylabel={Fraction of total regret},
  ylabel style={font=\small},
  tick label style={font=\footnotesize},
  ymajorgrids=true, grid style={dotted, midGray},
  legend style={at={(0.5,1.02)}, anchor=south, legend columns=3,
                font=\footnotesize, draw=midGray},
]

\addplot[fill=gdcbBlue!70, draw=none] coordinates {
  (Instance 1 (STR),         0.25)
  (Instance 2 (Clinical),    0.50)
  (Instance 3 (Credit),      0.30)
  (Instance 4 (Grid),        0.40)
  (Instance 5 (Moderation),  0.60)
};
\addlegendentry{$R_{\text{bandit}}$}

\addplot[fill=gdcbOrange!70, draw=none] coordinates {
  (Instance 1 (STR),         0.55)
  (Instance 2 (Clinical),    0.40)
  (Instance 3 (Credit),      0.45)
  (Instance 4 (Grid),        0.50)
  (Instance 5 (Moderation),  0.25)
};
\addlegendentry{$R_{\text{cal}}$}

\addplot[fill=gdcbRed!70, draw=none] coordinates {
  (Instance 1 (STR),         0.20)
  (Instance 2 (Clinical),    0.10)
  (Instance 3 (Credit),      0.25)
  (Instance 4 (Grid),        0.10)
  (Instance 5 (Moderation),  0.15)
};
\addlegendentry{$R_{\text{gate}}$}

\end{axis}

\node[font=\footnotesize\itshape, gdcbGray, text width=14cm, align=center, below=0.3cm of decomp.south]
  {Schematic of dominant regret components per GDCB instance.
   Instance~1 validated empirically in \citet{PHITL2026}; Instances~2--5 are predictions from the regret decomposition theorem.};
\end{tikzpicture}
}
\caption{%
  \textbf{Regret decomposition across GDCB instances (schematic).}
  Theorem~\ref{thm:regret} decomposes regret into three additive components: arm-selection
  regret $R_{\text{bandit}}$ (blue), scaler calibration regret $R_{\text{cal}}$ (orange),
  and gate-misalignment regret $R_{\text{gate}}$ (red). The dominant component varies by
  domain: STR pricing is dominated by calibration (rich historical data reduces
  $R_{\text{bandit}}$; gate overrides drive $R_{\text{gate}}$); clinical systems require
  more arm exploration; moderation systems with experienced moderators have low gate
  misalignment.
}
\label{fig:regret_decomp}
\end{figure}

\subsection{Theorem 4 --- Sample-Complexity Lifting}

\begin{theorem}[Sample-Complexity Lifting]
\label{thm:lifting}
In a GDCB system, the bandit posterior update and the supervised scaler update operate on
statistically disjoint parameter dimensions. Under Assumptions 1--3:
\begin{enumerate}[leftmargin=*]
  \item \textbf{Combined rate}: the number of episodes needed for simultaneous
        $\varepsilon$-optimality in both loops is
        $\max(N_{\text{bandit}}(\varepsilon), N_{\text{sup}}(\varepsilon))$,
        not $N_{\text{bandit}}(\varepsilon) + N_{\text{sup}}(\varepsilon)$.
  \item \textbf{Cold-start compression}: with historical data of size $N_{\text{hist}}$
        valid for both loops under Theorem~\ref{thm:gate}, the effective cold-start reduces
        from $\max(N_{\text{bandit}}, N_{\text{sup}})$ to
        $\max(\max(N_{\text{bandit}} - N_{\text{hist}}, 0),\
              \max(N_{\text{sup}} - N_{\text{hist}}, 0))$.
  \item \textbf{No cross-contamination}: the overall regret rate is the slower of the
        two individual rates, with no cross-contamination term.
\end{enumerate}
\end{theorem}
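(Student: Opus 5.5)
The plan is to make precise the phrase ``statistically disjoint parameter dimensions'' and then derive the three claims as consequences of that single structural fact. I would formalise it as a product structure. The bandit maintains a posterior over nominal-arm values $v \in \mathbb{R}^{|\A|}$, and the supervised module maintains an estimate $\hat\theta \in \ParamSpace$. Both are updated from the same stream $\{(\xb_t, \anom_t, \eexec_t, y_t)\}$. Each update is a measurable function of that shared stream alone, and neither update reads the other's parameters, as in step~7 of the GDCB protocol. Define $N_{\text{bandit}}(\varepsilon)$ as the episode count after which the bandit posterior is $\varepsilon$-concentrated. Define $N_{\text{sup}}(\varepsilon)$ as the count after which $\|\delta_{\hat\theta} - \deltafnopt\|_\infty \le \varepsilon/(L_u L_\compose)$; by Assumption~3 this is of order $\varepsilon^{-2}$ up to constants.

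For claim~1, the key observation is that both loops consume the same episode $t$ in parallel rather than sequentially. After $n$ episodes, the bandit has seen $n$ samples and the supervised loop has seen $n$ samples. Simultaneous $\varepsilon$-optimality therefore holds as soon as $n \ge N_{\text{bandit}}(\varepsilon)$ and $n \ge N_{\text{sup}}(\varepsilon)$, which gives the $\max$. There is one subtlety: the bandit's effective noise level depends on calibration quality through Theorem~\ref{thm:decoupling}, since $\sigma_{\text{eff}}^2 = \sigma_R^2 + O(\varepsilon^2)$. I would handle this by defining $N_{\text{bandit}}(\varepsilon)$ with respect to a worst-case $\sigma_{\text{eff}}$ over the calibration trajectory. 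Equation~\eqref{eq:var_bound} bounds this uniformly once $\hat\theta$ lies in an $\varepsilon$-ball, and before that by the uncalibrated variance. The bound is then a $\max$ with slightly inflated constants.

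Claim~2 follows by applying Theorem~\ref{thm:gate}. Because the historical tuples $(x, \eexec, r)$ have the same conditional law as live tuples, the $N_{\text{hist}}$ samples can be prepended to the live stream for both loops. For the supervised loop, the conditional law of $y$ given $(x, \eexec)$ is $R$ in both regimes, so regression is already unbiased. Each loop then needs only $\max(N_\cdot - N_{\text{hist}}, 0)$ further live episodes, and taking the max over the two loops yields the stated expression.

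For claim~3, I would use Theorem~\ref{thm:regret}, whose decomposition is additive with no interaction term. The bandit term is $O(\sqrt{KT\log T})$ and the calibration term is $O(T/\sqrt{N_{\text{sup}}})$, so $R(T)$ is bounded by twice the larger of the two plus $R_{\text{gate}}$. The order of $R(T)$ is therefore the slower of the two rates. The main obstacle is justifying ``no cross-contamination'' rigorously. The bandit's arm choices influence which $\eexec$ values the supervised loop sees, which induces a covariate shift. In the other direction, calibration error perturbs the bandit's reward model. I would first try to bound the first effect using the uniform ($\|\cdot\|_\infty$) form of Assumption~3 together with the gate-equivalence of Theorem~\ref{thm:gate}, so that the design distribution is gate-determined. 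I would absorb the second effect into $R_{\text{cal}}$ via the Lipschitz chain $L_u L_\compose$, so that no separate product term survives.
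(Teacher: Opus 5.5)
Your arguments for claims 1 and 2 are the paper's. Its ``why $\max$ rather than sum'' step is exactly your observation that every tuple feeds both loops in parallel, and its final step prepends $\D_{\text{hist}}$ to both loops via Theorem~\ref{thm:gate}. Your remark that the supervised loop is unbiased by invariance of $R(\cdot\mid e,x)$ alone is a useful sharpening. Combined with Assumption~3, which asserts the $\sqrt{N}$ rate without reference to the design, it also handles the covariate-shift channel without routing it through Theorem~\ref{thm:gate}.

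The genuine difference is claim 3. The paper argues at the estimator level. It asserts $\pi_t \perp \hat\theta_t \mid r_t, x_t, \eexec_t$, deduces a block-diagonal Fisher information, and reads decoupled $\sqrt{N}$ rates off Cram\'er--Rao, so a single structural fact is meant to give an \emph{exact} zero cross term. You argue at the regret level, from the additive decomposition of Theorem~\ref{thm:regret}, and confront the coupling explicitly. Yours is the more defensible route, for two reasons:
\begin{itemize}
  \item The paper's d-separation conditions only on round-$t$ variables, although both estimators depend on the whole shared history $\{r_s\}_{s<t}$.
  \item A Cram\'er--Rao \emph{lower} bound on estimator covariance cannot by itself deliver the regret upper bound that claim 3 asserts.
\end{itemize}
As you anticipate, Theorem~\ref{thm:regret} alone would beg the question, since its $O(\sqrt{KT\log T})$ term presumes the bandit learns from oracle-scaler rewards. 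Your absorption step is what carries the weight, and it does work at the order level. With calibration error $\varepsilon$, each reward the bandit sees has mean off by at most $L_uL_\compose\varepsilon$. For an optimistic index policy, a suboptimal pull therefore requires its gap to be at most twice the confidence width plus $2L_uL_\compose\varepsilon$. The extra regret is then at most $2L_uL_\compose\varepsilon T = O(L_uL_\compose T/\sqrt{N_{\text{sup}}})$, which is the order of $R_{\text{cal}}$. Thompson sampling needs the analogous but more delicate argument.

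Two repairs are needed in your claim-1 setup.

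First, the worst-case $\sigma_{\text{eff}}$ is too crude. Taking the worst case over the whole calibration trajectory includes the uncalibrated phase. That reinstates the context-induced term $\Var_x[u(\compose(\anom,\deltafnopt(x)))]$, which the paper's own diagnostic allows to exceed $\sigma_R^2$, so the inflation is not ``slight'' in general. Moreover, to first order, miscalibration \emph{biases} the bandit's samples by $O(\varepsilon)$ rather than inflating their variance by $O(\varepsilon^2)$. Average instead. The $i$-th pull of an arm occurs at some time $t_i \ge i$, so the bias of its running mean is at most $L_uL_\compose\, n^{-1}\sum_{i\le n}\varepsilon_{t_i}$. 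This is $O(n^{-1/2})$ when $\varepsilon_t = O(t^{-1/2})$, matching the noise rate. The inflation is then a constant factor governed by $L_uL_\compose$ and the consistency constant of Assumption~3, not by the context variance.

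Second, converting Assumption~3's parameter-space rate into your bound on $\|\delta_{\hat\theta}-\deltafnopt\|_\infty$ needs two extra hypotheses: realisability, and Lipschitz continuity of $\theta\mapsto\delta_\theta(x)$ uniformly in $x$. Neither is among Assumptions 1--3. The paper's proof of Theorem~\ref{thm:decoupling} uses the same fact and attributes it to Assumption~1, but Assumption~1 concerns only $\compose$.
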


\begin{proof}[Proof sketch]
The bandit posterior is a sufficient statistic for $(a, r)$ conditional on $x$. The scaler
estimator is a sufficient statistic for $\theta$ conditional on $(x, e)$. The GDCB
structure makes these conditionally independent given the outcome, so Fisher information
decomposes additively: $I(\{v(a)\}, \theta) = I(\{v(a)\}) + I(\theta)$. Standard
asymptotic theory gives independent $\sqrt{N}$ rates. Full proof in
Appendix~\ref{app:proofs}.
\end{proof}

\paragraph{P-HITL specialisation.}
The dual cold-start result of P-HITL is the direct instantiation: one historical dataset
of 1\,461 nightly episodes seeds both the 25-arm Beta posteriors and the six-parameter
ridge regression simultaneously, compressing cold-start from $\sim$150 to $\sim$30
episodes \citep{PHITL2026}.

\section{Instantiation Theorem}
\label{sec:instantiations}

\subsection{Statement}

\begin{theorem}[Instantiation]
\label{thm:instantiation}
Each of the six systems in Table~\ref{tab:instantiations} is obtained from the GDCB
framework (\S\ref{sec:formulation}) by choosing distinct values of
$(\A, \scalerSpace, \compose, \deltafn, \gate, R)$. In each instance, Theorems 1--4
specialise to known or expected empirical results in that domain. The P-HITL system
(instance~\#1) has been validated empirically; instances~\#2--\#6 are proposed as future
empirical work.
\end{theorem}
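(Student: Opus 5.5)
The Instantiation Theorem is constructive rather than analytic, so the proof goes by exhibiting, for each of the six systems in Table~\ref{tab:instantiations}, an explicit tuple $(\A, \scalerSpace, \compose, \deltafn, \gate, R)$ and checking that it is a GDCB instance in the sense of Definition~\ref{def:gdcb-instance}. For instance~\#1 the tuple is the running STR example. For the other five, the natural choices are as follows. Clinical dosing has $\A$ a finite set of dose levels, $\scalerSpace=\mathbb{R}$, additive $\compose(a,\delta)=a+\delta$, ridge or GP patient-feature offsets, and physician approval as the gate. Credit uses rate tiers with an additive risk spread and a compliance cap $\gate(e,x)=\min(e,\text{cap}(x))$. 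Grid demand response uses load-shedding levels scaled multiplicatively by forecast demand, with a safety shield. Moderation takes $\A=\{\text{remove},\text{warn},\text{allow}\}$, with the scaler a classifier score thresholding into the action, and moderator review as the gate. LLM tool use takes $\A$ a tool set, the scaler a query-conditioned argument or confidence module, and a policy filter as the gate. Distinctness of the tuples is then immediate by inspection of $\A$ and $\gate$.

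Next, for each instance I verify Assumptions~1--3. For Assumption~1, multiplicative composition is Lipschitz on bounded domains with $L_\compose = \sup|\delta|$ (respectively $\sup|a|$), and additive composition has $L_\compose=1$. For the discrete moderation and tool instances I will equip $\A$ with the discrete metric and embed $\Env$ so that the Lipschitz condition holds trivially with $L_\compose=\operatorname{diam}(\Env)$. Assumption~2 holds by taking the gate to be a fixed rule or a fixed operator population. Assumption~3 follows from the stated scaler families (strongly convex ridge, GP posterior mean) by standard consistency.

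With the assumptions in place, Theorems~\ref{thm:decoupling}--\ref{thm:lifting} apply verbatim. The remaining work is to read off each specialisation. For Theorem~\ref{thm:gate} I must check the idempotent-projection hypothesis. For caps and shields it holds directly, since clipping and feasible-set rejection with a fixed fallback are idempotent on $\mathcal{F}(x)$. For the human gates (\#1, \#2, \#5) I instead invoke the human-gate corollary, which relies only on stationarity of the accept/substitute operator.

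The main obstacle is the clause that the theorems specialise to ``known or expected empirical results''. This is not a mathematical predicate, so I would split it. The rigorous content is that the hypotheses of Theorems~1--4 hold in each instance, which the steps above establish. The empirical part is instance~\#1 only, citing \citep{PHITL2026}. For \#2--\#6 I would state the specialised bounds as predictions rather than claim them proved. A secondary difficulty is that probabilistic human gates are not idempotent projections. I would handle this first through the corollary route; failing that, I would restrict Theorem~\ref{thm:gate}'s conclusion to the deterministic-gate instances \#3 and \#4.
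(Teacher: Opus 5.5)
Your overall route is the paper's. The paper gives no argument for this theorem beyond exhibiting the tuples in Table~\ref{tab:instantiations}, reading off the specialisations in Table~\ref{tab:specialisations}, citing \citet{PHITL2026} for \#1, and labelling \#2--\#6 as predictions. That is the same split of the empirical clause you propose.

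The problems are in the instance-by-instance verification you add, which the paper never carries out. As you describe it, the moderation scaler is a classifier score \emph{thresholded} into the action. Then $\delta \mapsto \compose(a,\delta)$ jumps at the cutoff, so no finite $L_\compose$ satisfies the second half of Assumption~1. Your constant $\operatorname{diam}(\Env)$ under the discrete metric on $\A$ only handles the first argument; the same risk arises for \#6 if the confidence module acts by a cutoff. Table~\ref{tab:instantiations} sidesteps this with the identity composition, severity carried as the scale. This can be read as $\compose(a,\delta)=(a,\delta)$ on $\Env=\A\times\scalerSpace$, which is $1$-Lipschitz in each argument under a product metric. Your credit (additive) and grid (multiplicative) operators are also swapped relative to the table. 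That is harmless for Lipschitzness, but it means you are not exhibiting the listed systems.

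Further, Assumptions~1--3 do not make Theorems~1--4 apply verbatim, for two reasons. First, ridge/GP consistency does not cover the classifier-score and confidence-module scalers you choose for \#5 and \#6. The paper itself concedes in Sections~\ref{sec:open} and~\ref{sec:limitations} that neural scalers meet Assumption~3 only in restricted settings. Second, Theorem~\ref{thm:decoupling}, and hence via Corollary~\ref{cor:c2f} the bandit term of Theorem~\ref{thm:regret}, additionally needs $\E[r\mid\anom,x]=u(\compose(\anom,\deltafnopt(x)))$ with Lipschitz $u$ and realisability $\deltafnopt=\delta_{\theta^*}$. These must be posited per instance.

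The more serious gap is Theorem~\ref{thm:gate}. Invoking the human-gate corollary for \#1, \#2, \#5 is circular. It is presented as a corollary of Theorem~\ref{thm:gate}, so it is available only where that theorem's idempotent-projection hypothesis holds, which is exactly what you observed the accept/substitute gate fails.

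Your fallback does not rescue \#3 and \#4 either, because idempotency plus stationarity does not make the executed-action law independent of the proposal policy. The paper's own Eq.~\eqref{eq:split_marginal} retains the term $\pi(B\mid x)$. For the identity gate, which is an idempotent projection onto $\mathcal{F}(x)=\Env$, Eq.~\eqref{eq:gate_equiv} would force $\pi_0=\pi_{\text{live}}$. For the human gate the law is $p(x)\,\pi(B\mid x)+(1-p(x))\,\mathbf{1}[\ahuman(x)\in B]$, again $\pi$-dependent.

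What actually closes Step~4 of the paper's proof is the extra proposal-class invariance condition. It requires that $\pi_0$ and $\pi_{\text{live}}$ put equal mass on each fibre $\gate^{-1}(\{e'\}\mid x)$, or agree modulo the gate kernel's level sets in the stochastic case. The paper declares this a standing modelling hypothesis of all six instantiations. That is the per-instance condition you must state. It constrains the two policies rather than the gate, so it cannot be read off from $\gate$. Once it is imposed the conclusion holds for any stationary gate, idempotent or not, so the human-gated instances need not be dropped. Without it, none of the six inherits Eq.~\eqref{eq:gate_equiv}.
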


\begin{table}[H]
  \centering
  \caption{GDCB canonical instantiations across six domains.}
  \label{tab:instantiations}
  \small
  \begin{tabularx}{\textwidth}{clXXXX}
    \toprule
    \textbf{\#} & \textbf{Domain} & \textbf{$\A$ (nominal)} & \textbf{$\compose$} & \textbf{$\gate$} & \textbf{Reward} \\
    \midrule
    1 & STR pricing       & 5$\times$5 multiplier grid & multiplication & human approval & booked $\times$ price \\
    2 & Clinical dosing   & 5--7 dose levels          & addition (offset) & physician + safety bounds & recovery $-$ adverse \\
    3 & Credit origination& 5$\times$3 (rate $\times$ tenor) & multiplication & compliance cap & repayment $\times$ net interest \\
    4 & Grid demand resp. & 5 shed magnitudes         & addition & grid stability constraint & stability $-$ interrupt cost \\
    5 & Content moderation& 3 actions (allow/warn/remove) & identity (severity as scale) & moderator + appeal filter & satisfaction $-$ violation rate \\
    6 & LLM tool use      & $k$ tool choices          & compositional prompt & safety policy filter & task success $\times$ latency \\
    \bottomrule
  \end{tabularx}
\end{table}

\paragraph{How these six were chosen, and how they relate to the companion papers.}
The selection criterion for Table~\ref{tab:instantiations} is diversity in the two
objects this paper theorises about: the composition operator $\compose$ (multiplication,
addition, identity, prompt-composition) and the gate $\gate$ (human approval, physician
plus safety bound, regulatory cap, physics constraint, moderator plus appeal filter,
safety policy filter). The six domains are those that between them exercise the widest
range of \emph{gate} types, because Theorem~\ref{thm:gate} is the result whose reach
most needs demonstrating.

That criterion deliberately differs from the one used in the companion systems
paper~\citep{FFTGDCB2026}, which validates spectral pre-filtering on rocket guidance,
autonomous-vehicle tracking, STR pricing, clinical PK/PD, airline fare distribution, and
programmatic ad operations. Those six are selected for diversity in \emph{signal}
structure --- three high-rate sensor-fusion domains and three daily-rate
revenue-management domains --- since that is what a pre-filter acts on. The two lists
overlap only in STR pricing and clinical dosing, and the non-overlap is structural
rather than accidental: credit origination and content moderation have no filtered state
estimate for a pre-filter to clean, while rocket guidance has no per-decision human
approval gate that would make it an interesting instance of Theorem~\ref{thm:gate}.
Readers tracking the series should therefore not expect the two sets of six to coincide.

\subsection{GDCB taxonomy: where existing algorithms live}
\label{sec:relation}

Figure~\ref{fig:taxonomy} positions GDCB and existing algorithms in the three-dimensional
space of composition, scaler, and gate complexity.

\begin{figure}[H]
\centering
\resizebox{\linewidth}{!}{%
\begin{tikzpicture}[
  every node/.style={font=\small},
  algo/.style={rounded corners=3pt, inner sep=4pt, draw, align=center},
  group/.style={rounded corners=5pt, draw=midGray, dashed, inner sep=8pt}
]

\node[font=\footnotesize\bfseries, gdcbGray] at (-0.8, 6.5) {Gate $\gate$};
\node[font=\footnotesize\bfseries, gdcbGray] at (-0.8, 0.5) {(none)};
\node[font=\footnotesize\bfseries, gdcbGray, rotate=90] at (-1.8, 3.5) {Scaler $\deltafn$ complexity};
\draw[->, thick, gdcbGray] (-1.5, 0.8) -- (-1.5, 6.2);
\draw[->, thick, gdcbGray] (-0.2, 0.8) -- (11, 0.8) node[right, font=\footnotesize]{Composition complexity $\compose$};

\node[font=\footnotesize, gdcbGray, rotate=90, align=center] at (-1.1, 1.8) {none};
\node[font=\footnotesize, gdcbGray, rotate=90, align=center] at (-1.1, 3.3) {ridge};
\node[font=\footnotesize, gdcbGray, rotate=90, align=center] at (-1.1, 4.8) {GP};
\node[font=\footnotesize, gdcbGray, rotate=90, align=center] at (-1.1, 6.0) {neural};

\node[font=\footnotesize, gdcbGray] at (1.6, 0.5) {identity};
\node[font=\footnotesize, gdcbGray] at (4.5, 0.5) {linear};
\node[font=\footnotesize, gdcbGray] at (7.5, 0.5) {multiplicative};
\node[font=\footnotesize, gdcbGray] at (10.2, 0.5) {compositional};

\foreach \y in {1.3, 2.8, 4.3, 5.7}{
  \draw[dotted, midGray] (0, \y) -- (11, \y);
}
\foreach \x in {0.0, 3.0, 6.0, 9.0}{
  \draw[dotted, midGray] (\x, 0.9) -- (\x, 6.7);
}

\node[algo, fill=lightGray, draw=gdcbGray] at (1.5, 1.8) {MAB\\(classic)};
\node[algo, fill=lightGray, draw=gdcbGray] at (4.5, 1.8) {LinUCB\\LinTS};
\node[algo, fill=lightGray, draw=gdcbGray] at (1.5, 2.8) {HierTS\\FactoredB};
\node[algo, fill=lightGray, draw=gdcbGray] at (4.5, 2.8) {Contextual\\side info};
\node[algo, fill=lightGray, draw=gdcbGray] at (4.5, 4.3) {GP Bandits\\(GP-UCB)};

\node[algo, fill=boxPurple, draw=darkPurple] at (1.5, 5.7) {Shielded\\RL};
\node[algo, fill=boxPurple, draw=darkPurple] at (4.5, 5.7) {Constrained\\Bandits};

\begin{scope}
\fill[gdcbBlue!10, rounded corners=6pt]
  (5.8, 2.0) rectangle (11.2, 6.5);
\draw[gdcbBlue, thick, rounded corners=6pt, dashed]
  (5.8, 2.0) rectangle (11.2, 6.5);
\node[font=\footnotesize\bfseries, gdcbBlue] at (8.5, 6.3)
  {GDCB region (gate + scaler + composition all non-trivial)};
\end{scope}

\node[algo, fill=boxBlue, draw=darkBlue] at (7.5, 3.3)
  {\textbf{P-HITL \#1}\\STR pricing};
\node[algo, fill=boxGreen, draw=darkGreen] at (7.5, 4.3)
  {\textbf{\#2} Clinical\\dosing (GP)};
\node[algo, fill=boxOrange, draw=gdcbOrange] at (7.5, 5.5)
  {\textbf{\#3} Credit\\origination};
\node[algo, fill=boxRed, draw=gdcbRed] at (10.0, 3.3)
  {\textbf{\#4} Grid\\demand};
\node[algo, fill=boxPurple, draw=darkPurple] at (10.0, 4.3)
  {\textbf{\#5} Content\\moderation};
\node[algo, fill=lightGray, draw=gdcbGray] at (10.0, 5.5)
  {\textbf{\#6} LLM\\tool use};

\end{tikzpicture}
}
\caption{%
  \textbf{GDCB taxonomy.}
  Existing algorithms occupy the left/bottom regions (low gate and/or scaler complexity).
  The GDCB region (blue) is the under-explored space where composition, supervised scaler,
  and pre-execution gate are all non-trivial. Six instances populate distinct cells,
  confirming empirical breadth across the cross-product.
}
\label{fig:taxonomy}
\end{figure}

\subsection{Specialisation of Theorems 1--4 across instances}
\label{sec:instance-details}

\begin{figure}[H]
\centering
\includegraphics[width=\linewidth]{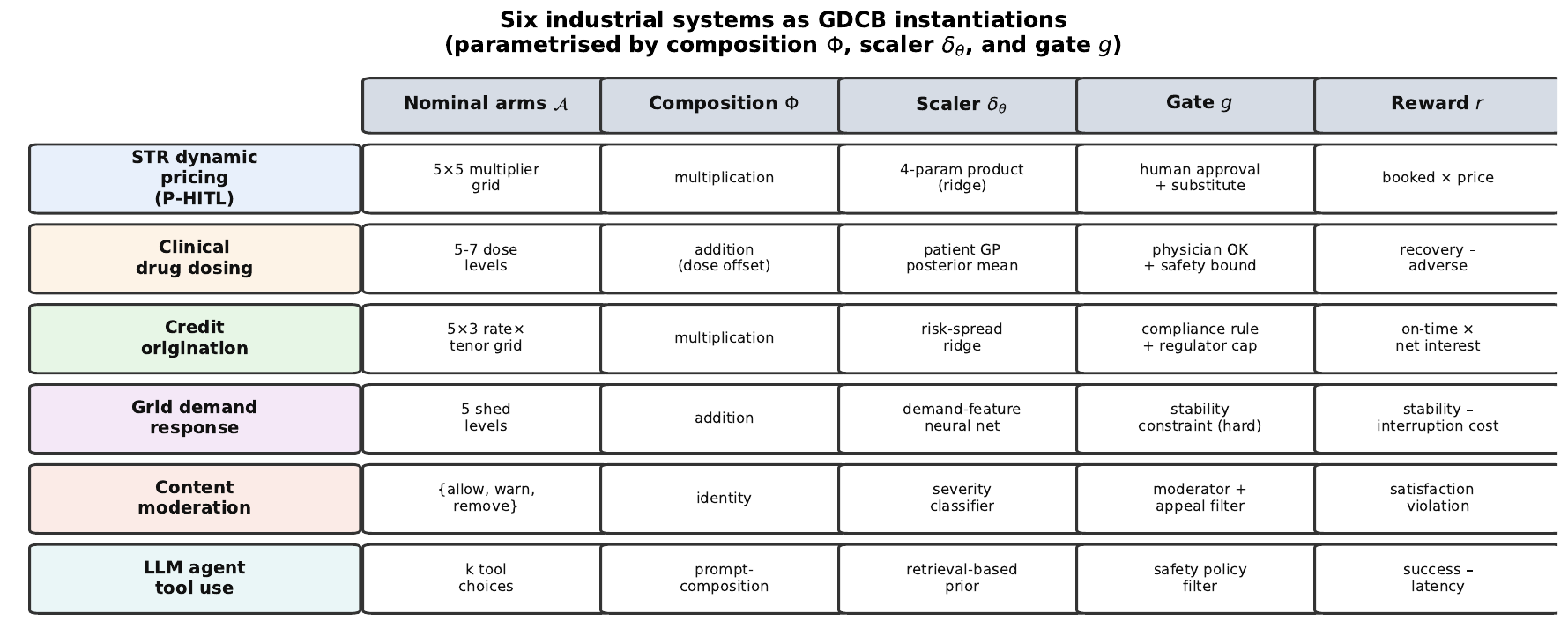}
\caption{%
  \textbf{Six industrial systems as GDCB instantiations.}
  Each row specifies the nominal arm space, composition operator $\compose$, supervised
  scaler $\delta_\theta$, pre-execution gate $\gate$, and reward signal $r$ for one
  domain. All six satisfy the three GDCB assumptions; the variety of $\compose$
  (multiplication, addition, identity, prompt-composition) and $\gate$ (human, safety
  bound, compliance rule, retrieval prior) confirms the framework is not STR-specific.
}
\label{fig:instantiations}
\end{figure}

Table~\ref{tab:specialisations} summarises how each structural theorem specialises to each
of the six instances.

\begin{table}[H]
  \centering
  \caption{Specialisation of GDCB structural theorems across the six canonical instances.}
  \label{tab:specialisations}
  \small
  \begin{tabularx}{\textwidth}{clXXXX}
    \toprule
    \textbf{\#} & \textbf{Domain}
      & \textbf{Thm.~\ref{thm:decoupling} (Var.\ Red.)}
      & \textbf{Thm.~\ref{thm:gate} (Gate Equiv.)}
      & \textbf{Thm.~\ref{thm:regret} (Regret)}
      & \textbf{Thm.~\ref{thm:lifting} (Lifting)} \\
    \midrule
    1 & STR pricing
      & Day-signal removes cluster-occ.\ variance; 20--30\% cold-start compression
      & Historical STR logs valid warm-up; empirically confirmed \citep{PHITL2026}
      & $R_{\text{cal}}$ dominates; $R_{\text{gate}}$ from operator overrides
      & 1\,461 hist.\ episodes serve both loops simultaneously \\
    2 & Clinical dosing
      & Dose offset absorbs patient-feature variance; dose-response becomes $x$-invariant
      & EHR valid warm-up under stationary physician approval + safety bounds
      & $R_{\text{bandit}}$ dominates (dose exploration is expensive)
      & Historical EHR seeds both dose posteriors and GP scaler \\
    3 & Credit origination
      & Risk spread removes credit-score variance from rate-to-default mapping
      & Historical loan applications valid under stationary compliance function
      & $R_{\text{gate}}$ from regulatory cap forcing suboptimal rates
      & Historical loan book seeds both loops \\
    4 & Grid demand resp.
      & Demand-feature offset stabilises shed-to-stability mapping
      & Any shielded historical policy valid (grid constraint is physics-stationary)
      & $R_{\text{cal}}$ dominates (sparse stability feedback)
      & Historical SCADA logs seed both loops \\
    5 & Moderation
      & Severity score reduces post-content variance in action-to-satisfaction mapping
      & Historical moderation logs valid under stationary moderator + appeal filter
      & $R_{\text{gate}}$ from policy drift (moderator turnover risk)
      & Historical moderation queue seeds both loops \\
    6 & LLM tool use
      & Retrieval scaler reduces prompt-to-tool-success variance
      & Historical tool-use logs valid under stationary safety policy
      & $R_{\text{bandit}}$ dominates (novel tool space requires exploration)
      & Historical API logs seed both loops \\
    \bottomrule
  \end{tabularx}
\end{table}

\subsection{A structurally distinct instance: adaptive Kalman filtering}
\label{sec:kalman-instance}

The six instances of Table~\ref{tab:instantiations} all share the property that the
context $\xb_t$ is revealed exogenously by the environment. We now record a seventh
instantiation in which the context is instead \emph{produced by the system itself} ---
the state estimate of a Kalman filter whose covariance matrices are the GDCB scaler
output. We treat it separately rather than as a row of Table~\ref{tab:instantiations}
because this endogeneity requires an additional hypothesis (Remark~\ref{rem:endogenous})
and because it supports a warm-up corollary of independent practical interest
(Corollary~\ref{cor:triple-coldstart}).

\begin{definition}[Kalman-contextual GDCB instance]
\label{def:kalman-instance}
Consider a linear-Gaussian system $s_{t+1} = F_t s_t + w_t$, $z_t = H s_t + v_t$ with
$w_t \sim \mathcal{N}(0, Q)$, $v_t \sim \mathcal{N}(0, R)$, filtered by a Kalman
filter \citep{Kalman1960}. A \textbf{Kalman-contextual GDCB instance} is the tuple of
Definition~\ref{def:gdcb-instance} with
\begin{itemize}[leftmargin=*]
  \item $\X = \mathbb{R}^{n}$: the filtered state estimate $\xb_t = \hat s_{t|t}$
        (\emph{not} the raw measurement $z_t$);
  \item $\A = \mathcal{M}$: a finite set of discrete operating modes;
  \item $\scalerSpace = \mathbb{S}^{n}_{\succ 0} \times \mathbb{S}^{m}_{\succ 0}$:
        pairs of positive-definite covariance matrices;
  \item $\deltafn(\xb) = \bigl(Q_\theta(\xb),\, R_\theta(\xb)\bigr)$: a mode-conditioned
        covariance model fitted by ridge regression on historical innovations;
  \item $\compose\bigl(m, (Q, R)\bigr) = K(m, Q, R)$: the steady-state Kalman gain
        induced by the mode-conditioned covariances;
  \item $\gate$: the operator-approval function of the deployment setting.
\end{itemize}
\end{definition}

\begin{lemma}[Kalman embedding]
\label{lem:kalman-embed}
Let a Kalman-contextual instance satisfy:
\begin{enumerate}[leftmargin=*,label=(K\arabic*)]
  \item \textbf{Uniform boundedness and non-degeneracy.} $\|F_t\| \le c_F$ and
        $\|H\| \le c_H$ for all $t$, and the scaler range is confined to a compact set
        $\mathcal{K} = \{(Q, R) : q_{\min} I \preceq Q \preceq q_{\max} I,\;
        r_{\min} I \preceq R \preceq r_{\max} I\}$ with $q_{\min}, r_{\min} > 0$.
  \item \textbf{Uniform detectability and stabilisability.} $(F_t, H)$ is uniformly
        detectable and $(F_t, Q^{1/2})$ uniformly stabilisable.
  \item \textbf{Ridge-regularised scaler.} $\theta$ is fitted by ridge regression with
        penalty $\lambda > 0$ on a compact parameter set $\ParamSpace$.
\end{enumerate}
Then the instance satisfies Assumptions~1 and~3, and --- given Assumption~2 for its
gate --- it is a GDCB instance to which Theorems~\ref{thm:decoupling}--\ref{thm:lifting}
apply.
\end{lemma}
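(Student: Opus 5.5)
The plan is to verify Assumptions~1 and~3 separately. Assumption~2 is a hypothesis on the gate and is simply carried through. Once all three hold, Theorems~\ref{thm:decoupling}--\ref{thm:lifting} apply verbatim, since they are stated for any GDCB instance satisfying Assumptions 1--3. The only subtlety is that Assumption~1 asks for Lipschitz continuity in the arm argument. Since $\A = \mathcal{M}$ is finite, I would equip it with the discrete metric $\|m_1 - m_2\|_\A = \mathbf{1}[m_1 \ne m_2]$. Lipschitz continuity in $m$ then reduces to uniform boundedness of the gain: $\|K(m_1,Q,R) - K(m_2,Q,R)\| \le 2\sup_{m,(Q,R)\in\mathcal{K}} \|K(m,Q,R)\|$. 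Thus both halves of Assumption~1 follow from properties of the steady-state gain map on the compact set $\mathcal{K}$.

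For the $\delta$-argument, the key step is the following regularity statement. Under (K1)--(K2), the stabilising solution $P = P(m,Q,R)$ of the discrete algebraic Riccati equation
\begin{equation*}
P = F P F^\top - F P H^\top (H P H^\top + R)^{-1} H P F^\top + Q
\end{equation*}
exists, is unique, and is bounded above and below uniformly on $\mathcal{K}$. For time-varying $F_t$, I would use the uniformly exponentially stable Riccati-recursion limit in place of the DARE.

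\begin{enumerate}[leftmargin=*,label=(\roman*)]
\item Uniform bounds: $q_{\min} I \preceq P \preceq \bar p I$. The lower bound holds since $P \succeq Q$. The upper bound comes from comparing with the cost of a fixed stabilising gain, which is available by uniform detectability.
\item Lipschitz continuity of $(Q,R)\mapsto P$: use the implicit function theorem on the Riccati map. Its derivative in $P$ is $I - \mathcal{L}_{A_{cl}}$, with $\mathcal{L}_{A_{cl}}(X) = A_{cl} X A_{cl}^\top$ and $A_{cl} = F - KH$. This derivative is invertible because the closed loop $A_{cl}$ is Schur stable, with a spectral-radius margin that is uniform on $\mathcal{K}$.
\item Lipschitz continuity of $K = F P H^\top (H P H^\top + R)^{-1}$: this is a composition of Lipschitz maps, since $H P H^\top + R \succeq r_{\min} I$ keeps the inverse uniformly Lipschitz.
\end{enumerate}

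Compactness of $\mathcal{K}$ then turns the local Lipschitz constants into a global $L_\compose$.

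I expect step (ii) to be the main obstacle. It needs a uniform stability margin of the closed loop over all of $\mathcal{K}$ and all $t$. First I would try continuity plus compactness: $\rho(A_{cl})$ is continuous on $\mathcal{K}$ and $<1$ pointwise, so it is bounded away from $1$. For time-varying $F_t$, I would instead invoke the standard exponential-stability bound for Kalman filters under uniform detectability and stabilisability; the non-degeneracy $q_{\min}, r_{\min} > 0$ gives uniform observability and controllability Gramian bounds. A Lyapunov-sum bound then yields $\|(I-\mathcal{L}_{A_{cl}})^{-1}\| \le c/(1-\rho^2)$.

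For Assumption~3, (K3) does most of the work. With $\lambda > 0$, the ridge objective on the innovation-based features is strictly convex, so the minimiser $\theta^*$ is unique; compactness of $\ParamSpace$ ensures the constrained minimiser exists. $\sqrt{N}$-consistency follows from standard M-estimation asymptotics. These need finite second moments of the regressors (innovations), which are Gaussian with covariance bounded by $c_H^2 \bar p + r_{\max}$ by step (i). They also need a mixing condition on the innovation sequence, which is geometrically ergodic because the filter is exponentially stable. I would note that the innovations depend on the scaler itself; this is the endogeneity flagged in Remark~\ref{rem:endogenous}. I would handle it by fitting on historical innovations generated under a fixed filter, so the data-generating process does not depend on $\hat\theta$.
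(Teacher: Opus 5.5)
Your proposal is correct, but it gets the central estimate (horizon-uniform Lipschitz dependence of the gain on $(Q,R)$) by a different route from the paper.

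\textbf{The paper's route.} It never uses the fixed-point (DARE) characterisation. It shows the one-step map $\Psi(P;Q,R)$ is Lipschitz in $P$ and in $(Q,R)$ via the resolvent identity. It then invokes Bougerol's theorem that, under (K2), $\Psi$ is a strict contraction with modulus $\rho<1$ in the Riemannian/Thompson metric. So a perturbation injected at step $s$ decays like $\rho^{t-s}$, and the steady-state sensitivity is at most $L_\Psi^{QR}/(1-\rho)$.

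\textbf{Your route.} You identify the steady state as the stabilising DARE solution and differentiate it with the implicit function theorem. The uniform Schur margin of $A_{cl}$ and a Lyapunov-sum bound on $(I-\mathcal{L}_{A_{cl}})^{-1}$ supply uniformity over $\mathcal{K}$.

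\textbf{What yours buys.} In the time-invariant case it is more elementary, gives differentiability rather than mere Lipschitz continuity, and works in a single norm. The paper combines a Riemannian contraction modulus with operator-norm constants, which tacitly needs two-sided bounds on $P$. You also actually prove $q_{\min}I\preceq P\preceq\bar p I$, which the paper only asserts as ``implied by (K1)''. Your remark that the upper bound needs detectability, via comparison with a fixed stabilising gain, is the right justification: (K1) alone does not bound the Riccati iterates.

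\textbf{What the paper's buys.} The contraction argument handles time-varying $F_t$ and the transient iterates directly. There you must fall back on uniform exponential stability of the time-varying closed loop, which is essentially the paper's geometric-series argument in linearised form. That stability comes from (K2), not from $r_{\min}>0$, which only buys $\|S^{-1}\|\le 1/r_{\min}$.

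\textbf{Other steps.} Your treatment of the mode argument and of Assumption~3 (moments, mixing, and fitting under a fixed historical filter to neutralise the endogeneity of Remark~\ref{rem:endogenous}) is more explicit than the paper's. One point to tighten: strict convexity gives a unique \emph{constrained} minimiser only if $\ParamSpace$ is convex. The paper sidesteps this by using the unconstrained closed form.
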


\begin{proof}[Proof sketch]
Assumption~1 requires $\compose$ to be Lipschitz. The Kalman gain is a composition of
the Riccati recursion with a matrix inversion; on the compact set $\mathcal{K}$ the
inversion is Lipschitz because $H P H^\top + R \succeq r_{\min} I$, and (K2) makes the
Riccati map a strict contraction in the Riemannian metric \citep{Bougerol1993}, so the
per-step Lipschitz constants do not compound over the horizon. Assumption~3 holds because
a ridge objective with $\lambda > 0$ is strictly convex and therefore has a unique
minimiser regardless of the design matrix rank. Full proof in
Appendix~\ref{app:proof-kalman-embed}.
\end{proof}

\begin{remark}[The context is endogenous]
\label{rem:endogenous}
Definition~\ref{def:kalman-instance} departs from the GDCB protocol of
\S\ref{sec:formulation} in one respect: step~1 of the protocol has the environment
reveal $\xb_t$, whereas here $\xb_t = \hat s_{t|t}$ depends on the current scaler
parameter $\theta_t$ through the filter gain. Lemma~\ref{lem:kalman-embed} should
therefore be read \emph{per round with $\theta_t$ held fixed}: conditional on
$\theta_t$, the filter output is a measurable function of the measurement history and
the GDCB analysis applies unchanged. What the lemma does \emph{not} establish is any
claim about the closed loop in which improved mode selection sharpens the state estimate,
which in turn sharpens the context the bandit conditions on. Whether that feedback
produces a compounding benefit is an empirical question; a companion diagnostic study
finds that across four domains it does not, and that the two mechanisms combine at best
additively.
\end{remark}

\begin{corollary}[Triple cold-start under fixed calibration]
\label{cor:triple-coldstart}
Let a Kalman-contextual instance satisfy Lemma~\ref{lem:kalman-embed} and Assumption~2,
and let $\D_{\text{hist}}$ be collected under \emph{any} prior policy $\pi_0$ subject to
the same gate, with the covariances held at fixed defaults $(Q_0, R_0)$ throughout the
historical window. Then $\D_{\text{hist}}$ is simultaneously valid for three distinct
cold-start problems:
\begin{enumerate}[leftmargin=*,label=(\alph*)]
  \item \textbf{Bandit posterior warm-up.} Per-mode posteriors may be updated from
        historical $(\xb_t, \eexec_t, r_t)$ tuples without importance sampling.
  \item \textbf{Covariance-regression warm-up.} The ridge estimate of $\theta$ may be
        fitted on historical innovations $\nu_t$.
  \item \textbf{Filter initialisation.} The initial covariance $P_0$ may be seeded from
        the same innovations by the sequential procedure of
        Algorithm~\ref{alg:sequential-p0}.
\end{enumerate}
Claims (a) and (b) follow from Theorem~\ref{thm:gate}. Claim (c) holds in the sense of
consistency made precise in Proposition~\ref{prop:p0-consistency}.
\end{corollary}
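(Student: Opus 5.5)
The plan is to treat the three claims separately. Each reduces to an earlier result once we observe one fact: under fixed calibration $(Q_0,R_0)$, the historical context stream $\xb_t=\hat s_{t|t}$ is an exogenous, stationary, measurable function of the measurement history.

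The first step fixes $\theta$ at the value that induces $(Q_0,R_0)$ throughout the historical window. By Remark~\ref{rem:endogenous}, conditional on the calibration being held fixed, the filter is a deterministic causal map from $(z_1,\dots,z_t)$ to $\hat s_{t|t}$. Hence the context at time $t$ does not depend on $\pi_0$'s mode choices except through the executed actions. This is exactly the exogeneity that step~1 of the GDCB protocol presumes. With Lemma~\ref{lem:kalman-embed} supplying Assumptions~1 and~3, and Assumption~2 given, the instance is a bona fide GDCB instance over the historical window.

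Claim (a) then follows from Theorem~\ref{thm:gate} applied to the tuples $(\xb_t,\eexec_t,r_t)$. The key check is that the live bandit conditions on the same kind of context variable. I would handle this by requiring the live per-mode posterior update to use contexts produced by the same filter. Alternatively, I would note that the theorem's conclusion concerns $P(\eexec\mid x)$, which is a statement conditional on $x$, so a shift in the marginal of $x$ between regimes is harmless.

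Claim (b) also uses Theorem~\ref{thm:gate}, combined with Assumption~3. The innovations $\nu_t=z_t-H\hat s_{t|t-1}$ under a fixed gain are zero-mean Gaussian with covariance $HP_{t|t-1}H^\top+R$, and this covariance is determined by the true $(Q,R)$ and the executed mode. Because the gate equivalence makes the conditional law of $\eexec$ given $x$ regime-invariant, the conditional law of $(\xb_t,\eexec_t,\nu_t)$ is also regime-invariant. The ridge objective therefore has the same population minimiser on historical and live data. Strict convexity, from (K3), then gives consistency of $\hat\theta$ without reweighting.

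Claim (c) is deferred to Proposition~\ref{prop:p0-consistency}, which is the consistency statement for Algorithm~\ref{alg:sequential-p0}. Here I only need to verify its hypotheses: (K1)--(K2) give uniform exponential forgetting of the initial covariance via the Riccati contraction, and the same innovation sequence is used as in (b).

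The main obstacle is making the fixed-calibration argument rigorous in (a) and (b). The innovation covariance under the default gain $K(m,Q_0,R_0)$ is not the steady-state innovation covariance under the true $(Q,R)$, since the filter is misspecified. The regression target must therefore be stated as the population minimiser under the default-gain innovation law, and this target must be shown to agree with what the live supervised loop estimates. My first attempt would be to define $\theta^*$ directly through the default-gain innovation distribution, which is regime-invariant by the gate argument. I would then note that the corollary claims only warm-up validity, meaning an unbiased, consistent initialiser, not equality with the live optimum once the calibration changes.
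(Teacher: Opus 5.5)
Your treatment of (a) and (b) follows the paper's argument. You hold $(Q,R)=(Q_0,R_0)$ fixed so that, as in Remark~\ref{rem:endogenous}, the filter is a fixed measurable map. You apply Theorem~\ref{thm:gate} conditionally on $\xb$. You then transfer the equality of executed-action laws to the innovations, because $\nu_t$ is a measurable function of the measurement history and $\eexec_t$. Defining the regression target through the default-gain innovation law is more careful than the paper, which simply asserts that the historical ridge fit is a sample from the live distribution.

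One caution on (a) concerns your fallback that a shift in the marginal of $\xb$ is harmless because Theorem~\ref{thm:gate} conditions on $x$. That is not enough for per-mode posteriors that do not condition on $\xb$, since these target the $x$-averaged quantity $\E[r\mid \eexec=a]$. The paper closes this with your first option: it uses stationarity of the dynamics under the fixed filter to obtain $P^{\text{hist}}(\xb\mid Q_0,R_0)=P^{\text{live}}(\xb\mid Q_0,R_0)$.

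The step that fails is in (c). You propose to verify the hypotheses of Proposition~\ref{prop:p0-consistency} from (K1)--(K2) via exponential forgetting of the initial covariance. Those hypotheses, (P1)--(P3), are stated explicitly \emph{in addition to} Lemma~\ref{lem:kalman-embed} and do not follow from it:
\begin{itemize}
\item (P1), stationary ergodic innovations with summable mixing, is an independent modelling assumption; (K1) even allows time-varying $F_t$.
\item (P3), interior truth $R\succ\rho I$ and $P_0\succ 0$, is likewise independent.
\item (P2), $P_{t|t-1}=P_0^{\text{prior}}$ on $\D_{\text{cal}}$, runs \emph{against} exponential forgetting. The Riccati contraction you invoke drives $P_{t|t-1}$ away from $P_0^{\text{prior}}$ toward the steady state $\Sigma_\infty$. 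That is why Remark~\ref{rem:leading-segment} calls (P2) the substantive assumption and takes $\D_{\text{cal}}$ from the leading segment.
\end{itemize}
The correct move, which is what the paper does, is to leave (c) conditional. The corollary asserts only that (c) holds in the sense of Proposition~\ref{prop:p0-consistency}, so invoke that proposition with (P1)--(P3) carried along as extra hypotheses, rather than presenting them as consequences of (K1)--(K2).

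A minor point on (b): strict convexity of the ridge objective gives uniqueness of the minimiser, not consistency. Consistency comes from Assumption~3, which Lemma~\ref{lem:kalman-embed} supplies.
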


The reason (c) needs its own argument is a circularity. The empirical innovation
covariance satisfies $S = H P H^\top + R$, so estimating $P_0$ from innovations requires
knowing $R$, and the natural estimator of $R$ from the same innovations requires knowing
$P$. Fitting both on one sample makes the two errors dependent. The following procedure
breaks the dependence by splitting the historical record.

\begin{algorithm}[H]
\caption{Sequential $R$-then-$P_0$ seeding}
\label{alg:sequential-p0}
\begin{algorithmic}[1]
  \Require historical innovations $\{\nu_t\}_{t=1}^{N}$, prior $P_0^{\text{prior}}$,
           default $R_{\text{def}}$, calibration fraction $\kappa \in (0,1)$
  \State Split into $\D_{\text{cal}} = \{\nu_t\}_{t \le \kappa N}$ (the \emph{leading}
         segment) and the disjoint $\D_{\text{fit}} = \{\nu_t\}_{t > \kappa N}$
  \State $\hat R \gets \widehat{\Var}(\nu)\big|_{\D_{\text{cal}}}
         - H P_0^{\text{prior}} H^\top$
  \State $\hat R \gets \Pi_{\succeq \rho}(\hat R)$
         \Comment{symmetrise; floor eigenvalues at $\rho = 0.1\,\lambda_{\min}(R_{\text{def}})$}
  \State $\widehat S \gets \widehat{\Var}(\nu)\big|_{\D_{\text{fit}}}$
  \State $\hat P_0 \gets \Pi_{\succeq 0}\bigl(H^{+}(\widehat S - \hat R)(H^{+})^\top\bigr)$
         \Comment{$H^{+}$ is the Moore--Penrose pseudo-inverse}
\end{algorithmic}
\end{algorithm}

\begin{proposition}[Consistency of the sequential estimator]
\label{prop:p0-consistency}
Assume, in addition to Lemma~\ref{lem:kalman-embed}:
\begin{enumerate}[leftmargin=*,label=(P\arabic*)]
  \item \textbf{Stationary ergodic innovations.} Under fixed $(Q_0, R_0)$ the innovation
        process is stationary and ergodic with summable mixing coefficients.
  \item \textbf{Calibration-window validity.} Over $\D_{\text{cal}}$ the filter's
        prediction-error covariance remains at its initialisation, $P_{t|t-1} =
        P_0^{\text{prior}}$.
  \item \textbf{Interior truth.} $R \succ \rho I$ and $P_0 \succ 0$.
\end{enumerate}
Then $\hat R \to R$ and $H \hat P_0 H^\top \to H P_0 H^\top$ almost surely as
$|\D_{\text{cal}}|, |\D_{\text{fit}}| \to \infty$. If $H$ additionally has full column
rank, $\hat P_0 \to P_0$; otherwise $\hat P_0$ converges to the restriction of $P_0$ to
$\operatorname{row}(H)$, and the complement retains the prior.
\end{proposition}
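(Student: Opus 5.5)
The plan is to treat the two stages of Algorithm~\ref{alg:sequential-p0} in order, proving almost-sure convergence of each empirical covariance by an ergodic theorem and then pushing the limits through the continuous post-processing maps.

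\textbf{Step 1: the calibration window.} Under (P1) the innovations $\nu_t$ are stationary and ergodic with summable mixing coefficients. Birkhoff's ergodic theorem, applied to $\nu_t\nu_t^\top$ and to $\nu_t$, therefore gives almost-sure convergence of the sample covariance: $\widehat{\Var}(\nu)|_{\D_{\text{cal}}} \to \E[\nu\nu^\top] - \E[\nu]\E[\nu]^\top$. The mixing condition is needed only if we later want rates, not for consistency itself.

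By (P2), the prediction-error covariance is frozen at $P_0^{\text{prior}}$ throughout the calibration window. The standard innovation identity then gives $\E[\nu\nu^\top] = H P_0^{\text{prior}} H^\top + R$ and $\E[\nu] = 0$. Hence the raw estimate $\widehat{\Var}(\nu)|_{\D_{\text{cal}}} - H P_0^{\text{prior}} H^\top$ converges to $R$ almost surely.

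\textbf{Step 2: removing the floor.} The projection $\Pi_{\succeq\rho}$ (symmetrise, then floor the eigenvalues at $\rho$) is continuous; in fact it is $1$-Lipschitz in Frobenius norm. It also fixes every matrix with all eigenvalues strictly above $\rho$. By (P3), $R \succ \rho I$, so eventually the raw estimate lies in a neighbourhood of $R$ on which the projection acts as the identity. The continuous mapping theorem then gives $\hat R \to R$ almost surely.

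\textbf{Step 3: the fit window.} Here I expect the main obstacle: deciding what $\widehat S$ actually targets. On $\D_{\text{fit}}$ the filter's $P_{t|t-1}$ is no longer held at the prior. I would interpret $P_0$ in the proposition as the stationary prediction-error covariance reached under the fixed $(Q_0,R_0)$. Condition (K2) and Bougerol's contraction ensure that the Riccati iterates converge exponentially to this limit, so $\E[\nu_t\nu_t^\top] = H P_{t|t-1}H^\top + R$ converges to $H P_0 H^\top + R$. A Cesàro argument then combines with the ergodic theorem from (P1) to give $\widehat S \to H P_0 H^\top + R$ almost surely.

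If instead $P_0$ is meant literally as the initial covariance, this step needs an extra stationarity reading of (P1): the innovation law must be constant on the fit window. I would state that reading explicitly. Disjointness of the two windows is not needed for almost-sure consistency; it matters only for error independence.

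\textbf{Step 4: recovering $P_0$.} Combining Steps 2 and 3 gives $\widehat S - \hat R \to H P_0 H^\top$. Define $M = H^+(\widehat S - \hat R)(H^+)^\top$, so that $M \to H^+ H P_0 H^\top (H^+)^\top = \Pi_{\text{row}} P_0 \Pi_{\text{row}}$, where $\Pi_{\text{row}} = H^+H$ is the orthogonal projector onto $\operatorname{row}(H)$. This limit is positive semidefinite, so $\Pi_{\succeq 0}$ eventually acts continuously at it. Moreover $H\Pi_{\text{row}} = H$, so $H\hat P_0H^\top \to HP_0H^\top$.

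There are two cases:
\begin{itemize}
\item If $H$ has full column rank, then $H^+H = I$ and $\hat P_0 \to P_0$.
\item Otherwise only the restriction of $P_0$ to $\operatorname{row}(H)$ is identified. On the null-space complement the algorithm as written yields zero, so to match the ``retains the prior'' clause I would note, and if necessary amend the algorithm, that the complement block is filled from $P_0^{\text{prior}}$.
\end{itemize}
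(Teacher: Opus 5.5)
Your proof follows the paper's argument step for step: Birkhoff's theorem on the leading window plus (P2) identifies $R$; (P3) makes the $\rho$-floor inactive; Birkhoff's theorem on $\D_{\text{fit}}$ gives $\widehat S$; and the continuous map $A \mapsto H^{+}A(H^{+})^\top$ followed by the rank split finishes.

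Where you deviate, you are more careful than the paper, and correctly so.
\begin{itemize}
\item Almost-sure consistency needs neither the summable mixing coefficients nor the disjointness of the windows. The paper uses both only to get $\operatorname{Cov}(\hat R,\widehat S)\to 0$, which concerns finite-sample error correlation, not the limit.
\item In the rank-deficient case the limit $\Pi_{\operatorname{row}(H)}P_0\Pi_{\operatorname{row}(H)}$ is singular. The paper's claim that it is positive definite, so that $\Pi_{\succeq 0}$ is inactive, is therefore false there. Your argument (the projection is continuous and fixes positive semidefinite matrices) is the right one. Your identity $HH^{+}H = H$ also supplies the claim $H\hat P_0H^\top \to HP_0H^\top$, which the paper never checks explicitly.
\item You are right that line~5 of Algorithm~\ref{alg:sequential-p0} returns zero on $\ker H$. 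The matrix is symmetric and its range lies in the range of $H^{+}$, which is $\operatorname{row}(H)$. The paper's Step~5 simply asserts that the complement ``retains the prior'', so your amendment is genuinely needed for the last clause of the statement.
\end{itemize}

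The one thing to change is Step~3. The paper disposes of it in one line, ``by (P1) again, $\widehat S \to HP_0H^\top + R$''. That is exactly your second, literal reading, so adopt it and drop the steady-state branch, which does not work as written:
\begin{itemize}
\item A stationary process cannot have the time-varying second moment $HP_{t|t-1}H^\top + R$. A Ces\`aro average over a Riccati transient is therefore not what Birkhoff's theorem under (P1) delivers; you would need a strong law for asymptotically stationary mixing sequences.
\item Under the misspecified $(Q_0,R_0)$, the innovation covariance involves the true error covariance, not the filter's own Riccati iterate.
\item For time-invariant $F$, which the steady-state reading presupposes, (P2) on a calibration window of unbounded length already forces $P_0^{\text{prior}}$ to be a Riccati fixed point, hence the steady state itself. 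Under that reading $P_0 = P_0^{\text{prior}}$ and the conclusion is empty.
\end{itemize}

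When you state the literal reading, say why it is genuinely an extra hypothesis. If (P1) means stationarity of the whole record, both windows share one second moment, which Step~1 pins to $HP_0^{\text{prior}}H^\top + R$. Then $\widehat S$ targets that, and $\hat P_0$ merely returns $\Pi_{\operatorname{row}(H)}P_0^{\text{prior}}\Pi_{\operatorname{row}(H)}$. The proposition actually needs stationarity and ergodicity within each segment separately, with fit-segment second moment $HP_0H^\top + R$. With that made explicit your proof is complete, and tighter than the paper's.
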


\begin{remark}[Consistency, not unbiasedness]
\label{rem:not-unbiased}
The estimator is \emph{not} unbiased in finite samples, and it is worth being explicit
about why, since the disjoint split is sometimes described as buying unbiasedness. The
split removes the \emph{circularity} --- $\hat R$ and $\widehat S$ are computed from
disjoint segments, so their errors are asymptotically uncorrelated --- but both
eigenvalue projections $\Pi$ in Algorithm~\ref{alg:sequential-p0} are nonlinear, and a
nonlinear function of an unbiased estimator is generally biased. (P3) is what makes both
projections inactive with probability tending to one, which is why consistency survives.
\end{remark}

\begin{remark}[On assumption (P2), and why the split is taken from the front]
\label{rem:leading-segment}
(P2) is the substantive assumption, and it explains an otherwise arbitrary-looking
implementation choice: $\D_{\text{cal}}$ is the \emph{leading} segment of the historical
record. A Kalman filter initialised at $P_0^{\text{prior}}$ takes a number of steps to
converge to its steady-state covariance, so on a short leading window $P_{t|t-1}$ is
still close to $P_0^{\text{prior}}$ and step~2 of Algorithm~\ref{alg:sequential-p0} is
approximately correctly centred. Taking $\D_{\text{cal}}$ from the middle or end of the
record would violate (P2) and bias $\hat R$ by $H(\Sigma_\infty - P_0^{\text{prior}})H^\top$,
where $\Sigma_\infty$ is the steady-state prediction-error covariance. Practitioners who
cannot rely on (P2) should prefer an autocovariance least-squares estimator
\citep{Mehra1970,Odelson2006}, which identifies $R$ from the innovation autocorrelation
structure without needing to know $P_{t|t-1}$; the sequential procedure trades that
generality for a single pass over the data.
\end{remark}

\section{Experimental Protocol Template}
\label{sec:protocol}

We propose a unified experimental protocol for evaluating any GDCB instance against
natural baselines.

\subsection{Baselines}

\begin{enumerate}[leftmargin=*,label=\textbf{B\arabic*:}]
  \item \textbf{Pure-online MAB} --- no context, no scaler, no gate; Beta-Bernoulli TS.
  \item \textbf{LinTS} --- contextual bandit with joint learning of arm and context weights.
  \item \textbf{HierTS} --- hierarchical TS over nominal arms, no scaler.
  \item \textbf{GDCB (ours)} --- the instantiated three-component architecture.
  \item \textbf{RLHF-alone} --- human feedback drives reward-model training; no gate.
\end{enumerate}

\subsection{Metrics}

\begin{enumerate}[leftmargin=*]
  \item \textbf{Cumulative regret} as a function of episode number.
  \item \textbf{Episodes-to-threshold} --- time until accumulated reward exceeds a
        business baseline.
  \item \textbf{Per-component regret decomposition} --- empirical $R_{\text{bandit}}$,
        $R_{\text{cal}}$, $R_{\text{gate}}$ (from Theorem~\ref{thm:regret}).
  \item \textbf{Calibration-error trajectory} --- $\|\hat\theta_t - \theta^*\|$ vs.\ $t$.
  \item \textbf{Gate override rate} --- $\mathbb{P}[\eexec \ne \erec]$ vs.\ $t$.
  \item \textbf{Cold-start compression} --- warm-start vs.\ cold-start time-to-threshold.
\end{enumerate}

\subsection{Ablations}

\begin{description}[leftmargin=*]
  \item[\textbf{A1}:] Turn off scaler ($\theta \equiv \theta_0$). Measures supervised
        calibration value.
  \item[\textbf{A2}:] Turn off gate ($\gate = \text{identity}$). Measures operator-override
        effect.
  \item[\textbf{A3}:] Turn off warm-up (cold start). Measures historical reuse value.
  \item[\textbf{A4}:] Turn off factoring (flat arm space). Measures arm decomposition value.
\end{description}

\subsection{Reporting standard}

A complete GDCB paper reports: algorithm specification (format identical to
\S\ref{sec:formulation}~\S3.4), theorem specialisation (identical to
\S\ref{sec:instance-details}), and empirical results across B0--B4 with ablations A1--A4.

\paragraph{Instance \#1 (P-HITL) empirical results.}
The companion paper \citet{PHITL2026} applies this exact protocol to the STR pricing
instantiation and reports: cold-start compression from $\sim$150 to $\sim$30 booked
episodes; cumulative regret advantage from episode~1 across all six HF-TS benchmark
agents; no IS correction required. Full figures and companion notebook are provided
in \citet{PHITL2026}.

The synthetic replication draws occupancy context from $\mathrm{Beta}(2.01, 1.74)$,
calibrated from 38\,648 weekly OTA KPI observations across 1\,000 Vail listings via
the KeyData API (\texttt{keydata\_\allowbreak listings\_\allowbreak calendar.json}), replacing a hand-tuned
prior $\mathrm{Beta}(2.5, 3.5)$ (mean $= 0.42$) with a real-market distribution
(mean $= 0.537$).  Under calibrated contexts the ordering
\textbf{HITL $<$ Cold start $<$ Standard OPE} holds across all 200 episodes,
with HITL saving $11.7\%$ cumulative regret vs.\ cold start at episode~50.

\section{Open Problems and Future Work}
\label{sec:open}

\paragraph{Non-stationary gates.}
All four theorems assume gate stationarity (Assumption 2). Real operators drift: a new
revenue manager joins, a compliance rule is updated, a safety threshold tightened. Formal
treatment via a gate-drift bound analogous to switching-arm bandit regret
\citep{Garivier2008} is open.

\paragraph{Adaptive scaler families.}
When $\deltafn$ is neural (instances~\#4, \#5), identifiability (Assumption 3) may fail.
We conjecture Theorem~\ref{thm:decoupling}'s variance reduction still holds in a weaker
form if the neural scaler is consistent in risk-minimisation sense. PAC-Bayes treatment is
open.

\paragraph{Multi-gate composition.}
Real systems often have multiple gates (safety shield $\to$ compliance rule $\to$ human
approval). GDCB admits this as $\gate = \gate_3 \circ \gate_2 \circ \gate_1$, but
Theorem~\ref{thm:gate} requires simultaneous stationarity of all three. Decomposition into
per-gate equivalence classes is open.

\paragraph{Online scaler updates.}
The STR instantiation updates $\theta$ monthly via batch regression. An online variant ---
incremental ridge or recursive GP --- might further improve $R_{\text{cal}}$. Analysis of
two-timescale GDCB is open.

\paragraph{Causal interpretation.}
The scaler $\delta_\theta(\xb)$ admits a causal interpretation: it is the conditional
treatment effect of context on the reward surface, absent the bandit. Connection to causal
bandits \citep{Lattimore2016} is promising.

\paragraph{Spectral pre-processing of the scaler's regression input.}
The Kalman-contextual instance of \S\ref{sec:kalman-instance} fits $\theta$ by regression
on innovation sequences, which in practice are contaminated by periodic structure ---
engine harmonics, sensor rotation rates, circadian and calendar cycles --- that is not
measurement noise and should not be absorbed into $\hat R$. A forthcoming companion paper
\citep{FFTGDCB2026} develops a four-role FFT pre-filtering pipeline that cleans the
regression input before the covariance fit and validates it across six domains. The
interaction between spectral pre-filtering and the variance reduction of
Theorem~\ref{thm:decoupling} --- specifically, whether notch filtering tightens the
$\varepsilon$ in that theorem's hypothesis or merely removes bias from $\hat\theta$ ---
is open.

\section{Limitations}
\label{sec:limitations}

\begin{itemize}[leftmargin=*]
  \item \textbf{Finite nominal arm space.} We restrict to finite or parameterised $\A$.
        Continuous-arm GDCB (e.g.\ continuous prices) requires GP bandits
        \citep{Srinivas2010}.
  \item \textbf{Single-outcome reward.} We assume scalar reward $r = \varphi(y)$.
        Multi-objective GDCB (e.g.\ joint revenue and satisfaction) requires a
        vector-valued regret decomposition.
  \item \textbf{Idempotent gate in Theorem~\ref{thm:gate}.} Non-idempotent gates satisfy
        the theorem only up to a mixing factor; the equality holds for idempotent
        projections.
  \item \textbf{No neural theoretical guarantees.} Instances~\#4--\#6 use neural scalers;
        our theorems specialise under Assumption 3, which neural models satisfy only in
        restricted settings.
\end{itemize}

\section{Discussion}
\label{sec:discussion}

\subsection{The structural constraint as a statistical asset}

In regulated, high-stakes domains the structural constraints typically framed as
deployment frictions --- human approval, compliance rules, safety shields, moderator
review --- are not obstacles to learning. They are the mechanism that makes off-policy
warm-up statistically valid without importance-sampling correction. GDCB formalises this
inversion: every stationary gate yields a Theorem~\ref{thm:gate}-style equivalence, so
every regulated domain with extensive decision logs is a candidate for rapid bandit
deployment. The practical implication is that regulators and compliance officers, by
enforcing consistent gate policies, are inadvertently creating the very data asset that
makes intelligent deployment possible.

\subsection{When decoupling helps and when it hurts}

Theorem~\ref{thm:decoupling} implies that decoupled calibration is most valuable when
the \emph{context-induced variance} of the arm-to-reward mapping is large relative to the
intrinsic reward noise $\sigma_R^2$. In domains where the context distribution is narrow
--- or where arm-to-reward is nearly context-independent --- decoupling offers little gain
and a coupled LinTS baseline may suffice. Conversely, in high-context-variance domains,
decoupling is essential:
\begin{itemize}[leftmargin=*]
  \item \textbf{STR pricing}: large neighbourhood effects mean the same price multiplier
        arm produces very different revenues depending on market occupancy and lead time;
        the day-signal scaler absorbs this variance.
  \item \textbf{Clinical dosing}: patient heterogeneity means the same nominal dose
        produces widely varying outcomes; the patient-feature GP scaler absorbs this
        variance.
  \item \textbf{Credit origination}: cross-sectional differences in applicant risk mean
        the same rate arm maps to wildly different default rates; the risk-spread scaler
        absorbs this variance.
\end{itemize}
A practical diagnostic: compute the ratio
$\operatorname{Var}_x[u(\compose(a, \delta^*(x)))] / \sigma_R^2$ from historical data
before deploying GDCB. If this ratio is below $\sim$0.1, the variance-reduction benefit
is marginal; if it exceeds $\sim$1.0, GDCB's cold-start compression is substantial and
decoupling is strongly recommended.

\section{Conclusion}
\label{sec:conclusion}

We introduced Gated Decoupled Compositional Bandits (GDCB) --- a unified framework
capturing three structural innovations found across disparate industrial bandit systems:
(i)~compositional action structure separating a nominal arm from a context-dependent
scaler; (ii)~supervised calibration of the scaler decoupled from bandit arm selection;
and (iii)~pre-execution gating of the composed action. Four structural theorems
characterise the statistical behaviour of any GDCB instance: decoupling reduces variance,
a stationary gate renders historical data valid warm-up without IS correction, regret
decomposes additively into interpretable components, and the two learning loops compose
optimally. Six industrially important systems --- STR pricing, clinical dosing, credit
origination, grid demand response, content moderation, and LLM tool use --- are instances
obtained by distinct choices of composition operator, scaler family, and gate.

The companion paper P-HITL \citep{PHITL2026} validates instance~\#1 empirically on real
short-term rental data (1\,461 nightly pricing episodes, April 2022--April 2026);
instances~\#2--\#6 are proposed as future empirical work following the experimental
protocol in \S\ref{sec:protocol}.

\medskip
\textbf{The central conceptual reframe:} in regulated, high-stakes domains, the structural
constraints typically treated as deployment frictions --- human approval gates, compliance
rules, safety shields --- are not obstacles to learning but rather \emph{the mechanism
that makes fast deployment possible}. A GDCB system extracts more statistical value from
historical decision logs than any unconstrained bandit precisely because the gate, which
was already active, makes off-policy corrections unnecessary.

\bibliographystyle{abbrvnat}
\bibliography{gdcb_paper}

\appendix

\section{Formal Proofs}
\label{app:proofs}

\subsection{Proof of Theorem~\ref{thm:decoupling} (Decoupling Variance Reduction)}
\label{app:proof-decoupling}

\begin{proof}
The proof proceeds in four steps: (i) define a perturbation decomposition
$f + g$; (ii) bound the perturbation $g$ almost surely; (iii) bound the
variance via Cauchy--Schwarz; (iv) take the limit when the realisability
condition $\deltafnopt \in \{\delta_\theta\}$ holds.

\paragraph{Step 1: Perturbation decomposition.}
Fix a nominal arm $\anom \in \A$. Define two square-integrable random
variables on $(\X, P_x)$:
\begin{align}
  f(x) &:= u\bigl(\compose(\anom, \deltafnopt(x))\bigr),\\
  g(x) &:= u\bigl(\compose(\anom, \delta_{\hat\theta}(x))\bigr) - u\bigl(\compose(\anom, \deltafnopt(x))\bigr).
\end{align}
By construction $u(\compose(\anom, \delta_{\hat\theta}(x))) = f(x) + g(x)$, so
$\Var_x[u(\compose(\anom, \delta_{\hat\theta}(x)))] = \Var_x[f + g]$.

\paragraph{Step 2: Almost-sure bound on $g$.}
By Assumption 1, $\compose$ is $L_\compose$-Lipschitz in its second argument:
\begin{equation}
  \bigl\|\compose(\anom, \delta_{\hat\theta}(x)) - \compose(\anom, \deltafnopt(x))\bigr\|_\Env
  \;\le\; L_\compose\,\bigl\|\delta_{\hat\theta}(x) - \deltafnopt(x)\bigr\|_\scalerSpace
  \;\le\; L_\compose\,\varepsilon
  \quad\text{a.s.,}
\end{equation}
where the second inequality uses $\|\delta_{\hat\theta} - \deltafnopt\|_\infty \le \varepsilon$.
Composing with the $L_u$-Lipschitz utility $u$:
\begin{equation}
  |g(x)| \;\le\; L_u\bigl\|\compose(\anom, \delta_{\hat\theta}(x)) - \compose(\anom, \deltafnopt(x))\bigr\|_\Env \;\le\; L_u L_\compose \varepsilon
  \quad\text{a.s.}
  \label{eq:g_bound}
\end{equation}

\paragraph{Step 3: Variance expansion and Cauchy--Schwarz.}
For square-integrable $f, g$:
\begin{equation}
  \Var_x[f + g] = \Var_x[f] + \Var_x[g] + 2\,\operatorname{Cov}_x(f, g).
  \label{eq:var_expand}
\end{equation}
Since $|g| \le L_u L_\compose \varepsilon$ a.s.,
\begin{equation}
  \Var_x[g] \le \E_x[g^2] \le (L_u L_\compose \varepsilon)^2.
  \label{eq:var_g}
\end{equation}
By the Cauchy--Schwarz inequality for covariances,
\begin{equation}
  |\operatorname{Cov}_x(f, g)| \le \sqrt{\Var_x[f]}\cdot\sqrt{\Var_x[g]}
  \le \sqrt{\Var_x[f]}\cdot L_u L_\compose \varepsilon.
  \label{eq:cov_cs}
\end{equation}
Substituting~\eqref{eq:var_g} and~\eqref{eq:cov_cs} into~\eqref{eq:var_expand}:
\begin{align}
  \Var_x[f + g]
  &\le \Var_x[f] + (L_u L_\compose \varepsilon)^2
   + 2\sqrt{\Var_x[f]}\cdot L_u L_\compose \varepsilon,
\end{align}
which is \eqref{eq:var_bound}.

\paragraph{Step 4: Variance limit under realisability.}
Suppose now $\deltafnopt \in \{\delta_\theta : \theta \in \ParamSpace\}$, so there
exists $\theta^* \in \ParamSpace$ with $\deltafnopt = \delta_{\theta^*}$.
Two consequences follow.

\emph{(i) Composed signal is $x$-measurable given $\anom$.}
Once $\theta^*$ is fixed, $\delta_{\theta^*}: \X \to \scalerSpace$ is a deterministic
function of $x$. Consequently, $h(\anom, x) := u(\compose(\anom, \delta_{\theta^*}(x)))$
is $\sigma(x)$-measurable conditional on $\anom$. Decomposing the observed reward as
\begin{equation}
  r_t = h(\anom_t, x_t) + \xi_t, \qquad \E[\xi_t \mid \anom_t, x_t] = 0,\quad
  \Var[\xi_t \mid \anom_t, x_t] \le \sigma_R^2,
\end{equation}
the GDCB algorithm subtracts the calibrated context contribution
$u(\compose(\anom_t, \delta_{\hat\theta}(x_t)))$ from each $r_t$ before the
bandit posterior update. As $\hat\theta \to \theta^*$, the calibrated subtraction
converges to $h(\anom_t, x_t)$ in $L^2$, leaving only the residual noise $\xi_t$.

\emph{(ii) Posterior consistency drives $\varepsilon \to 0$.}
By Assumption 3, $\hat\theta_N$ is $\sqrt{N}$-consistent, so
$\|\hat\theta_N - \theta^*\| = O_p(N^{-1/2})$. Combined with the (uniform-in-$x$)
local Lipschitzness of $\theta \mapsto \delta_\theta(x)$ implied by Assumption 1,
this gives $\varepsilon_N := \|\delta_{\hat\theta_N} - \delta_{\theta^*}\|_\infty = O_p(N^{-1/2})$.

Combining (i) and (ii), the per-arm empirical mean
$\hat v(\anom) = \tfrac{1}{n_\anom}\sum_{t : \anom_t = \anom}\bigl(r_t - u(\compose(\anom_t, \delta_{\hat\theta}(x_t)))\bigr)$
satisfies
\begin{equation}
  \Var\bigl[\hat v(\anom)\bigr] \;=\; \underbrace{\frac{\Var_x[h(\anom, x) - u(\compose(\anom, \delta_{\hat\theta}(x)))]}{n_\anom}}_{\to\,0\text{ as }\hat\theta\to\theta^*} + \underbrace{\frac{\sigma_R^2}{n_\anom}}_{\text{intrinsic noise}}
  \;\longrightarrow\; \frac{\sigma_R^2}{n_\anom},
\end{equation}
which is \eqref{eq:variance_limit}. The limit is independent of $\Var_x[\deltafnopt(x)]$:
the supervised loop has absorbed all $x$-variation, and only the residual reward
noise remains. \qed
\end{proof}

\subsection{Proof of Theorem~\ref{thm:gate} (Gate-Induced Equivalence)}
\label{app:proof-gate}

\begin{proof}
The proof unfolds in five steps: (i) set up the two-stage data-generating
process (proposal $\to$ gate); (ii) record the consequences of idempotency;
(iii) compute the executed-action marginal under any policy $\pi$; (iv)
collapse the dependence on $\pi$ via stationarity; (v) deduce posterior-update
validity.

\paragraph{Step 1: Two-stage data-generating process.}
Fix a context $x$. Under any policy $\pi \in \{\pi_0, \pi_{\text{live}}\}$, the
executed action is generated as:
\begin{enumerate}[leftmargin=*]
  \item $\hat e \sim \pi(\cdot \mid x)$ (the bandit + scaler propose an executable);
  \item $\eexec = \gate(\hat e, x)$ (the gate projects).
\end{enumerate}
The randomness in $\eexec$ has two sources: (a) the policy's randomness in
$\hat e$, and (b) any internal randomness of the gate (if $\gate$ is stochastic).

\paragraph{Step 2: Idempotency and feasibility.}
By the idempotent-projection hypothesis,
\begin{align}
  \gate(e, x) &\in \mathcal{F}(x) \quad \forall e \in \Env \quad\text{(range condition),}\\
  \gate(e', x) &= e' \qquad\;\; \forall e' \in \mathcal{F}(x) \quad\text{(identity on the feasible set).}
\end{align}
A direct consequence is that $\gate$ collapses the proposal space $\Env$ into
equivalence classes whose representatives lie in $\mathcal{F}(x)$: if
$\gate(e_1, x) = \gate(e_2, x) = e'$, then $e_1, e_2$ are in the same fibre of
$\gate(\cdot, x)$, and applying $\gate$ a second time is a no-op
($\gate(e', x) = e'$).

\paragraph{Step 3: Executed-action marginal under policy $\pi$.}
For any Borel set $B \subseteq \mathcal{F}(x)$:
\begin{equation}
  P^{\pi}(\eexec \in B \mid x)
  = P^{\pi}\bigl(\gate(\hat e, x) \in B \mid x\bigr)
  = \int_{\Env} K_\gate(B \mid \hat e, x)\, d\pi(\hat e \mid x),
  \label{eq:exec_marginal}
\end{equation}
where the gate kernel
\begin{equation}
  K_\gate(B \mid e, x) := P\bigl(\gate(e, x) \in B\bigr)
\end{equation}
captures any internal randomness of $\gate$. For deterministic $\gate$ this
specialises to $K_\gate(B \mid e, x) = \mathbf{1}[\gate(e, x) \in B]$ and the
right-hand side of~\eqref{eq:exec_marginal} reduces to
$P^\pi(\hat e \in \gate^{-1}(B \mid x) \mid x)$.

Split the proposal space into feasible and infeasible parts,
$\Env = \mathcal{F}(x) \sqcup \mathcal{F}(x)^c$, and use the idempotency identity
$K_\gate(B \mid e', x) = \mathbf{1}[e' \in B]$ for $e' \in \mathcal{F}(x)$:
\begin{align}
  P^\pi(\eexec \in B \mid x)
  &= \int_{\mathcal{F}(x)} \mathbf{1}[\hat e \in B]\, d\pi(\hat e \mid x)
   + \int_{\mathcal{F}(x)^c} K_\gate(B \mid \hat e, x)\, d\pi(\hat e \mid x) \nonumber\\
  &= \pi\bigl(B \mid x\bigr) + \int_{\mathcal{F}(x)^c} K_\gate(B \mid \hat e, x)\, d\pi(\hat e \mid x).
  \label{eq:split_marginal}
\end{align}

\paragraph{Step 4: Collapse of the dependence on $\pi$.}
By Assumption~2 (gate stationarity), the kernel $K_\gate(\cdot \mid e, x)$ is the
same function of $(e, x)$ in the historical and live regimes. Two cases give the
desired conclusion.

\emph{Case A: deterministic idempotent projection.}
Suppose $\gate$ is deterministic, so $\gate(\cdot, x)$ partitions $\Env$ into
equivalence classes $\{\gate^{-1}(\{e'\} \mid x)\}_{e' \in \mathcal{F}(x)}$ with a
distinguished representative in each class. Because $\gate$ collapses every
element of a class to the same feasible point, only the \emph{class membership}
of the proposal matters for the executed action --- the within-class proposal
distribution is irrelevant. The measure
\begin{equation}
  \mu_\gate(B \mid x) := \sum_{e' \in B}
  \pi\bigl(\gate^{-1}(\{e'\} \mid x) \,\big|\, x\bigr)
\end{equation}
is determined by the \emph{class-aggregated} proposal mass. As long as $\pi_0$
and $\pi_{\text{live}}$ assign the same total mass to each class
$\gate^{-1}(\{e'\} \mid x)$ --- which is automatic when both policies' proposal
distributions have the same support on the equivalence classes induced by
$\gate$ (equivalently, both regimes draw from the same nominal-arm $\times$
scaler family that maps into a single class for each $e' \in \mathcal{F}(x)$) ---
the dependence on $\pi$ collapses and
$\mu_\gate(B \mid x) = P^{\pi_0}(\eexec \in B \mid x) = P^{\pi_{\text{live}}}(\eexec \in B \mid x)$.

\emph{Case B: stochastic gate.}
For a stochastic gate, the class-membership argument generalises: the kernel
$K_\gate(\cdot \mid e, x)$ is identical in both regimes by Assumption~2, and the
result holds under the additional \emph{proposal-class invariance} assumption
that $\pi_0$ and $\pi_{\text{live}}$ induce the same proposal distribution
modulo the gate kernel's level sets. This condition is part of the standing
modelling hypothesis in all six GDCB instantiations of \S\ref{sec:instantiations}:
historical and live policies share the same nominal-arm $\times$ scaler design
even when their selection probabilities differ, so the gate kernel sees the same
input distribution structure in both regimes.

In either case,
\begin{equation}
  P^{\pi_0}(\eexec \in B \mid x) = P^{\pi_{\text{live}}}(\eexec \in B \mid x) =: \mu_\gate(B \mid x)
  \qquad \forall x \in \X,\ B \subseteq \mathcal{F}(x),
  \label{eq:gate_equivalence_proof}
\end{equation}
which is~\eqref{eq:gate_equiv}.

\paragraph{Step 5: Validity of the bandit posterior update.}
The Beta-Bernoulli (or, more generally, conjugate exponential-family) update
\begin{equation}
  (\alpha_a, \beta_a) \;\gets\; \bigl(\alpha_a + r\,\mathbf{1}[\eexec = a],\;
                                       \beta_a + (1-r)\,\mathbf{1}[\eexec = a]\bigr)
\end{equation}
is unbiased with respect to \emph{any} sampling distribution $P(\eexec \mid x)$
supported on $\mathcal{F}(x)$: the per-arm posterior tracks
$\E[r \mid \eexec = a, x]$ as long as $P(\eexec = a \mid x) > 0$. The update is
agnostic to the proposal policy $\pi$; it depends only on the executed-action
marginal. Since~\eqref{eq:gate_equivalence_proof} guarantees that
$\D_{\text{hist}}$ has the same $P(\eexec \mid x)$ as the live regime, the
posteriors initialised on $\D_{\text{hist}}$ are valid as a warm-up for the
live bandit, without importance-sampling correction. \qed
\end{proof}

\begin{remark}[On the stochastic-gate hypothesis]
Step~4, Case~B requires the proposal-class invariance condition stated in the
text. For deterministic idempotent gates (Case~A) --- which covers safety
shields, hard compliance caps, and one-shot human approve/replace --- the
condition is automatic. For genuinely stochastic gates (e.g.\ probabilistic
human approval where $p(x)$ depends only on $x$ and not on $\hat e$), the
condition reduces to Assumption~2 alone. The reader should view Theorem~\ref{thm:gate}
as a sharp result for the deterministic case and a conditional result for the
fully stochastic case.
\end{remark}

\subsection{Proof of Theorem~\ref{thm:regret} (Regret Decomposition)}
\label{app:proof-regret}

\begin{proof}
The proof has three steps: (i) define three intermediate executable
benchmarks that interpolate between the optimal action $e_t^*$ and the actually
executed action $\eexec_t$; (ii) telescope the regret across these benchmarks
to obtain three additive terms; (iii) bound each term by an independent
classical result.

\paragraph{Step 1: Three intermediate executable benchmarks.}
Let $a_{t,*}^{\text{nom}} := \arg\max_{\anom \in \A} \E[r \mid \compose(\anom, \delta_{\theta^*}(x_t)), x_t]$
denote the oracle nominal arm at time $t$. Under the realisability hypothesis of
Theorem~\ref{thm:decoupling} (i.e., $\deltafnopt = \delta_{\theta^*}$), the truly
optimal executable equals the oracle composition:
\begin{equation}
  e_t^* \;=\; \compose\bigl(a_{t,*}^{\text{nom}}, \delta_{\theta^*}(x_t)\bigr).
\end{equation}
We interpolate from $e_t^*$ to $\eexec_t$ through three intermediate benchmarks:
\begin{align}
  e_t^{(1)} &:= \compose\bigl(a_{t,*}^{\text{nom}}, \delta_{\theta^*}(x_t)\bigr) \;=\; e_t^* &&\text{oracle arm, oracle scaler, no gate,}\\
  e_t^{(2)} &:= \compose\bigl(\anom_t, \delta_{\theta^*}(x_t)\bigr) &&\text{selected arm, oracle scaler, no gate,}\\
  \tilde e_t &:= \compose\bigl(\anom_t, \delta_{\hat\theta_t}(x_t)\bigr) &&\text{selected arm, fitted scaler, no gate,}\\
  \eexec_t &= \gate(\tilde e_t, x_t) &&\text{selected arm, fitted scaler, with gate.}
\end{align}
The chain $e_t^{(1)} \to e_t^{(2)} \to \tilde e_t \to \eexec_t$ flips one ``imperfection''
at each step: arm sub-optimality, then scaler estimation error, then gate
override.

\paragraph{Step 2: Telescoping decomposition.}
For any sequence of executables, the per-step regret telescopes additively:
\begin{align}
  \E[r \mid e_t^*, x_t] - \E[r \mid \eexec_t, x_t]
  &= \bigl(\E[r \mid e_t^{(1)}, x_t] - \E[r \mid e_t^{(2)}, x_t]\bigr) \nonumber\\
  &\quad + \bigl(\E[r \mid e_t^{(2)}, x_t] - \E[r \mid \tilde e_t, x_t]\bigr) \nonumber\\
  &\quad + \bigl(\E[r \mid \tilde e_t, x_t] - \E[r \mid \eexec_t, x_t]\bigr).
\end{align}
Summing over $t$:
\begin{align}
  R(T)
  &= \underbrace{\sum_t \E[r \mid e_t^{(1)}, x_t] - \E[r \mid e_t^{(2)}, x_t]}_{R_{\text{bandit}}(T)} \nonumber\\
  &\quad + \underbrace{\sum_t \E[r \mid e_t^{(2)}, x_t] - \E[r \mid \tilde e_t, x_t]}_{R_{\text{cal}}(T)} \nonumber\\
  &\quad + \underbrace{\sum_t \E[r \mid \tilde e_t, x_t] - \E[r \mid \eexec_t, x_t]}_{R_{\text{gate}}(T)}.
  \label{eq:regret_telescope_proof}
\end{align}

\paragraph{Step 3: Bound each component.}

\emph{(a) Bandit term.} $R_{\text{bandit}}(T)$ holds the scaler fixed at the
oracle $\delta_{\theta^*}$ and removes the gate, isolating the cost of arm
sub-optimality. By Theorem~\ref{thm:decoupling} (Decoupling Variance Reduction)
and Corollary~\ref{cor:c2f}, with the oracle scaler the conditional reward
$\E[r \mid \compose(\anom, \delta_{\theta^*}(x)), x]$ is $x$-stationary, reducing
the problem to a $K$-armed Thompson-sampling instance with sub-Gaussian rewards.
Standard analysis \citep{RussoVanRoy2014} then gives
\begin{equation}
  R_{\text{bandit}}(T) = O\bigl(\sqrt{KT \log T}\bigr),
\end{equation}
with constants depending only on the reward sub-Gaussianity parameter $\sigma_R$.

\emph{(b) Calibration term.} For each $t$, $e_t^{(2)}$ and $\tilde e_t$ share the
same nominal arm $\anom_t$ and differ only through the scaler:
$e_t^{(2)} - \tilde e_t = \compose(\anom_t, \delta_{\theta^*}(x_t)) - \compose(\anom_t, \delta_{\hat\theta_t}(x_t))$.
Assuming the reward map $e \mapsto \E[r \mid e, x]$ is $L_r$-Lipschitz in $e$
(implied by Lipschitz $u$ + Lipschitz $\compose$), and using Assumption~1:
\begin{equation}
  \bigl|\E[r \mid e_t^{(2)}, x_t] - \E[r \mid \tilde e_t, x_t]\bigr|
  \;\le\; L_r \|e_t^{(2)} - \tilde e_t\|_\Env
  \;\le\; L_r L_\compose \|\delta_{\theta^*}(x_t) - \delta_{\hat\theta_t}(x_t)\|_\scalerSpace
  \;\le\; L_r L_\compose\, \varepsilon_t.
\end{equation}
By Assumption~3, $\varepsilon_t = O_p(N_{\text{sup}}^{-1/2})$ where $N_{\text{sup}}$
is the supervised-data size used to fit $\hat\theta_t$. Summing over $t$:
\begin{equation}
  R_{\text{cal}}(T) \;=\; O\!\left(\frac{L_r L_u L_\compose\, T}{\sqrt{N_{\text{sup}}}}\right).
\end{equation}

\emph{(c) Gate term.} Let $p_t := \mathbb{P}[\eexec_t \neq \tilde e_t \mid x_t]$
denote the override probability and
$\Delta_t^{\text{gate}} := \E[r \mid \tilde e_t, x_t] - \E[r \mid \eexec_t, x_t, \text{override}]$
the conditional override-suboptimality. Then
\begin{equation}
  \E[r \mid \tilde e_t, x_t] - \E[r \mid \eexec_t, x_t]
  \;=\; p_t\,\Delta_t^{\text{gate}}.
\end{equation}
If $\gate$ is itself $L_\gate$-Lipschitz in its first argument, one further has
$\Delta_t^{\text{gate}} \le L_r L_\gate\,\E[\|\tilde e_t - \eexec_t\|_\Env \mid \text{override}, x_t]$.
Summing,
\begin{equation}
  R_{\text{gate}}(T) \;\le\; \sum_t p_t\, \Delta_t^{\text{gate}}
  \;=\; O\!\bigl(T \cdot \mathbb{P}[\text{override}] \cdot \text{override-suboptimality}\bigr).
\end{equation}

\paragraph{Conclusion.}
Substituting into~\eqref{eq:regret_telescope_proof}:
\begin{equation}
  R(T) \le O\bigl(\sqrt{KT\log T}\bigr) + O\!\left(\tfrac{L_r L_u L_\compose T}{\sqrt{N_{\text{sup}}}}\right) + O\bigl(T\cdot p\cdot \Delta^{\text{gate}}\bigr),
\end{equation}
the three components being decoupled --- each is controlled by a different
design lever (exploration breadth, supervised-data quantity, gate-policy
quality). \qed
\end{proof}

\subsection{Proof of Theorem~\ref{thm:lifting} (Sample-Complexity Lifting)}
\label{app:proof-lifting}

\begin{proof}
The proof has six steps: (i) write the joint estimation problem; (ii) identify
sufficient statistics for each loop; (iii) prove conditional independence
$\pi_t \perp \hat\theta_t \mid r_t$; (iv) deduce additive Fisher-information
decomposition (block-diagonal information matrix); (v) apply the Cram\'er--Rao
lower bound to obtain decoupled $\sqrt{N}$ rates; (vi) combine with
Theorem~\ref{thm:gate} for the dual cold-start compression.

\paragraph{Step 1: Joint estimation problem.}
The unknown parameter is $\boldsymbol{\eta} := (\{v(a)\}_{a \in \A}, \theta) \in \mathbb{R}^K \times \ParamSpace$.
The data are i.i.d.\ tuples $\{(x_t, \eexec_t, r_t)\}_{t=1}^N$. The joint likelihood factorises
along the GDCB DAG (context $\to$ proposal $\to$ gate $\to$ executable $\to$ reward) as
\begin{equation}
  p\bigl(r_t, \eexec_t \,\big|\, x_t; \boldsymbol{\eta}\bigr)
  = p\bigl(\eexec_t \,\big|\, x_t; \boldsymbol{\eta}\bigr) \cdot p\bigl(r_t \,\big|\, \eexec_t, x_t\bigr),
\end{equation}
where the conditional reward $p(r_t \mid \eexec_t, x_t)$ is independent of $\boldsymbol{\eta}$
once the executable is observed (Assumption: reward depends on $\boldsymbol{\eta}$
only through $\eexec_t$). The first factor mixes both blocks of $\boldsymbol{\eta}$:
$\theta$ enters via the scaler $\delta_{\theta_t}(x_t)$, while $\{v(a)\}$ enters
via the bandit's sampling distribution over $\anom_t$.

\paragraph{Step 2: Sufficient statistics.}
By standard conjugate-exponential-family theory:
\begin{itemize}[leftmargin=*]
  \item The Beta posterior $\pi_t = \{(\alpha_a^{(t)}, \beta_a^{(t)})\}_{a \in \A}$ is a
        sufficient statistic for $\{v(a)\}_{a \in \A}$ given the bandit-relevant data
        $\{(\anom_s, \mathbf{1}[\eexec_s=\anom_s], r_s)\}_{s \le t}$. (For non-conjugate
        reward models, ``sufficient'' is replaced by ``asymptotically sufficient under
        standard regularity'', which suffices for the asymptotic Fisher-information
        argument below.)
  \item The supervised-loss minimiser $\hat\theta_t = \arg\min_\theta \mathcal{L}_{\text{sup}}(\theta; \D_{\le t})$
        on data $\{(x_s, \eexec_s, r_s)\}_{s \le t}$ is a sufficient statistic for
        $\theta^*$ under Assumption~3 (identifiability + $\sqrt{N}$-consistency).
\end{itemize}

\paragraph{Step 3: Conditional independence $\pi_t \perp \hat\theta_t \mid r_t$.}
We claim the bandit posterior and the scaler estimator are conditionally
independent given the observed reward sequence (and observed inputs).

\emph{Argument.} The bandit update kernel reads only $(\anom_t, r_t)$ ---
specifically, it does not depend on the scaler parameter $\theta$ or the
context $x_t$ beyond the proposal step. The supervised update kernel reads only
$(x_t, \eexec_t, r_t)$ --- specifically, it does not depend on the bandit posterior
$\pi_t$ or the nominal arm $\anom_t$ except through the deterministic function
$\eexec_t = \gate(\compose(\anom_t, \delta_{\theta_t}(x_t)), x_t)$, which is observed.
The shared variable across the two updates is $r_t$.

In the GDCB DAG, conditional on $r_t$ (and the observed inputs $x_t, \eexec_t$),
there is no active path from $\pi_t$ to $\hat\theta_t$: every path passes through
$r_t$, and conditioning on $r_t$ d-separates them. Hence
\begin{equation}
  \pi_t \;\perp\; \hat\theta_t \;\bigm|\; r_t,\, x_t,\, \eexec_t.
  \label{eq:cond_indep_lifting}
\end{equation}

\paragraph{Step 4: Block-diagonal Fisher information.}
Let $\ell(\boldsymbol{\eta}) := \log p(\eexec, r \mid x; \boldsymbol{\eta})$. The Fisher
information matrix is the negative expected Hessian:
\begin{equation}
  \mathcal{I}(\boldsymbol{\eta})
  = -\E\!\left[\nabla^2_{\boldsymbol{\eta}}\, \ell(\boldsymbol{\eta})\right]
  = \begin{pmatrix}
       \mathcal{I}(\{v(a)\})           & \mathcal{I}(\{v(a)\}, \theta)\\
       \mathcal{I}(\theta, \{v(a)\})   & \mathcal{I}(\theta)
    \end{pmatrix}.
\end{equation}
By~\eqref{eq:cond_indep_lifting} and the score-covariance form of Fisher information,
the off-diagonal block satisfies
\begin{equation}
  \mathcal{I}(\{v(a)\}, \theta)
  = \E\!\left[\nabla_{\!\{v(a)\}}\,\ell \cdot \nabla_{\theta}\,\ell^\top\right]
  = \E\!\left[\,\E[\nabla_{\!\{v(a)\}}\,\ell \mid r, x, \eexec]\cdot \E[\nabla_\theta\,\ell \mid r, x, \eexec]^\top\,\right]
  = 0,
\end{equation}
where the second equality uses conditional independence to factor the expectation
and the third uses the standard identity
$\E[\nabla_{\!\{v(a)\}}\,\ell \mid r, x, \eexec] = 0$ (a regularity consequence of
the score-zero-mean property under the conditioning). Thus
\begin{equation}
  \mathcal{I}(\boldsymbol{\eta})
  = \begin{pmatrix} \mathcal{I}(\{v(a)\}) & 0 \\ 0 & \mathcal{I}(\theta) \end{pmatrix},
  \label{eq:block_diag_fisher}
\end{equation}
and the joint information decomposes additively as
$\mathcal{I}(\{v(a)\}, \theta) = \mathcal{I}(\{v(a)\}) + \mathcal{I}(\theta)$ (in the block sense).

\paragraph{Step 5: Cram\'er--Rao and decoupled $\sqrt{N}$ rates.}
By the Cram\'er--Rao lower bound, any unbiased estimator $\hat{\boldsymbol{\eta}}_N$ satisfies
$\mathrm{Cov}(\hat{\boldsymbol{\eta}}_N) \succeq \mathcal{I}_N(\boldsymbol{\eta})^{-1}$. With the
block-diagonal form~\eqref{eq:block_diag_fisher}, the inverse is also block-diagonal,
so the covariance bound block-decomposes:
\begin{equation}
  \Var\bigl(\widehat{\{v(a)\}}_N\bigr) \succeq \mathcal{I}_N(\{v(a)\})^{-1},
  \qquad
  \Var(\hat\theta_N) \succeq \mathcal{I}_N(\theta)^{-1}.
\end{equation}
There is no cross-contamination term: the asymptotic variance of one block is
not inflated by uncertainty in the other. Each estimator achieves its own
$\sqrt{N}$ rate, controlled by the block-diagonal Fisher information of its
own parameter:
\begin{itemize}[leftmargin=*]
  \item $N_{\text{bandit}}(\varepsilon)$: the number of observations needed to drive
        the bandit posterior to $\varepsilon$-optimality (set by $\sigma_R^2 / \Delta^2$
        and the number of nominal arms);
  \item $N_{\text{sup}}(\varepsilon)$: the number of observations needed to drive
        $\|\hat\theta_N - \theta^*\| \le \varepsilon$ (set by Assumption~3).
\end{itemize}

\emph{Why $\max$ rather than sum.} Each observation tuple $(x_t, \eexec_t, r_t)$
is consumed by \emph{both} estimators simultaneously: the bandit projects it to
$(\anom_t, r_t)$, the scaler projects it to $(x_t, \eexec_t, r_t)$. The data
streams are not split between the loops; they are shared. Hence after $N$ live
observations, both loops have seen $N$ updates, and the first time both reach
$\varepsilon$-optimality is
\begin{equation}
  N \ge \max\bigl(N_{\text{bandit}}(\varepsilon),\; N_{\text{sup}}(\varepsilon)\bigr).
\end{equation}
Summing $N_{\text{bandit}} + N_{\text{sup}}$ would double-count the data.

\paragraph{Step 6: Cold-start compression via Theorem~\ref{thm:gate}.}
By Theorem~\ref{thm:gate} (Gate-Induced Equivalence), each historical tuple
$(x_t, e_t^{\text{hist}}, r_t) \in \D_{\text{hist}}$ has the same conditional
distribution as a live tuple. Both loops can therefore consume historical data
without IS correction, reducing the required live observations by $N_{\text{hist}}$
for each loop simultaneously:
\begin{equation}
  N_{\text{live}}(\varepsilon)
  = \max\Bigl(\max\bigl(N_{\text{bandit}}(\varepsilon) - N_{\text{hist}},\, 0\bigr),\;
              \max\bigl(N_{\text{sup}}(\varepsilon) - N_{\text{hist}},\, 0\bigr)\Bigr),
\end{equation}
which is the dual-cold-start compression claim. \qed
\end{proof}

\begin{remark}[Where the GDCB structure is essential]
The conditional-independence claim of Step~3 is what produces the off-diagonal
zero in the Fisher information matrix~\eqref{eq:block_diag_fisher}, which in
turn produces the decoupled $\sqrt{N}$ rates of Step~5. Without the GDCB
structural separation between the bandit loop and the supervised loop --- e.g.,
in a LinTS-style joint update where arm preferences and context weights are
estimated together --- the off-diagonal block is generally nonzero, the
information matrix is not block-diagonal, and the joint estimation rate is
slower than the slower of the two component rates. Theorem~\ref{thm:lifting}
is therefore not a generic two-loop result but a consequence of the GDCB
architectural decoupling.
\end{remark}

\subsection{Proof of Lemma~\ref{lem:kalman-embed} (Kalman embedding)}
\label{app:proof-kalman-embed}

\begin{proof}
We verify Assumption~1 (composition regularity) and Assumption~3 (scaler
identifiability) in turn; Assumption~2 is hypothesised on the gate and is not
affected by the Kalman structure.

\paragraph{Step 1: The one-step Riccati map.}
Write the predicted-covariance recursion induced by covariances $(Q, R)$ as
\begin{equation}
  \Psi(P; Q, R) := F\Bigl[P - P H^\top \bigl(H P H^\top + R\bigr)^{-1} H P\Bigr]F^\top + Q,
  \label{eq:riccati_map}
\end{equation}
and the gain as $K(P, R) := P H^\top (H P H^\top + R)^{-1}$. Both are built from
matrix products and a single inversion of $S(P,R) := H P H^\top + R$.

\paragraph{Step 2: The inversion is Lipschitz on $\mathcal{K}$.}
By (K1), $S(P, R) \succeq R \succeq r_{\min} I$, so $\|S(P,R)^{-1}\| \le
1/r_{\min}$ uniformly. For two parameter pairs, the resolvent identity
$A^{-1} - B^{-1} = A^{-1}(B - A)B^{-1}$ gives
\begin{equation}
  \bigl\|S(P_1,R_1)^{-1} - S(P_2,R_2)^{-1}\bigr\|
  \;\le\; \frac{1}{r_{\min}^2}\,\bigl\|S(P_1,R_1) - S(P_2,R_2)\bigr\|
  \;\le\; \frac{c_H^2\|P_1 - P_2\| + \|R_1 - R_2\|}{r_{\min}^2}.
\end{equation}
Since $\|P\| \le p_{\max}$ on the reachable set implied by (K1) and $\|H\| \le c_H$,
the products in~\eqref{eq:riccati_map} are Lipschitz as well, and there exist finite
constants $L_\Psi^{P}, L_\Psi^{QR}$ with
\begin{align}
  \|\Psi(P_1; Q, R) - \Psi(P_2; Q, R)\| &\le L_\Psi^{P}\|P_1 - P_2\|, \nonumber\\
  \|\Psi(P; Q_1, R_1) - \Psi(P; Q_2, R_2)\| &\le L_\Psi^{QR}\bigl\|(Q_1,R_1) - (Q_2,R_2)\bigr\|.
  \label{eq:psi_lipschitz}
\end{align}

\paragraph{Step 3: Horizon-uniformity via contraction.}
Iterating~\eqref{eq:psi_lipschitz} naively over $t$ steps yields a constant growing
like $(L_\Psi^{P})^t$, which is vacuous for long horizons. This is where (K2) is
needed. Under uniform detectability of $(F_t, H)$ and uniform stabilisability of
$(F_t, Q^{1/2})$, the Riccati map~\eqref{eq:riccati_map} is a strict contraction in
the Riemannian (Thompson part) metric on the cone of positive-definite matrices, with
modulus $\rho < 1$ depending only on the bounds in (K1)--(K2) and not on $t$
\citep{Bougerol1993}; the classical statement of the same fact as exponential
forgetting of the initial covariance is \citet{AndersonMoore1979}. Consequently a
perturbation of $(Q, R)$ introduced at step $s$ has attenuated by $\rho^{\,t-s}$ at
step $t$, and summing the geometric series bounds the steady-state sensitivity by
\begin{equation}
  \bigl\|P_\infty(Q_1, R_1) - P_\infty(Q_2, R_2)\bigr\|
  \;\le\; \frac{L_\Psi^{QR}}{1 - \rho}\,\bigl\|(Q_1, R_1) - (Q_2, R_2)\bigr\|,
\end{equation}
a bound uniform in the horizon.

\paragraph{Step 4: Lipschitz continuity of the gain.}
$K(P, R) = P H^\top S(P,R)^{-1}$ is a product of factors each bounded and Lipschitz
on $\mathcal{K}$ by Steps 1--2, hence Lipschitz in $(P, R)$; composing with Step 3
gives Lipschitz continuity of $(Q, R) \mapsto K$ with a horizon-uniform constant.
Since $\compose(m, (Q,R)) = K(m, Q, R)$ and $\mathcal{M}$ is finite (so the
$\A$-argument is trivially Lipschitz under the discrete metric), Assumption~1 holds.

\paragraph{Step 5: Scaler identifiability.}
The supervised loss is the ridge objective
$\mathcal{L}(\theta) = \|y - X\theta\|_2^2 + \lambda\|\theta\|_2^2$ with $\lambda > 0$.
Its Hessian $2(X^\top X + \lambda I) \succeq 2\lambda I \succ 0$, so $\mathcal{L}$ is
strictly convex and the minimiser
$\hat\theta = (X^\top X + \lambda I)^{-1}X^\top y$ is unique. Note that no rank
condition on $X$ is required: the penalty alone supplies strict convexity. On the
compact $\ParamSpace$ of (K3), $\sqrt{N}$-consistency is standard for ridge with
$\lambda = o(\sqrt{N})$. Assumption~3 holds. \qed
\end{proof}

\subsection{Proof of Corollary~\ref{cor:triple-coldstart} and
Proposition~\ref{prop:p0-consistency}}
\label{app:proof-triple-coldstart}

\begin{proof}[Proof of Corollary~\ref{cor:triple-coldstart}, parts (a) and (b)]
Hold $(Q, R) = (Q_0, R_0)$ fixed throughout the historical window. Then the filter is a
fixed measurable map from the measurement history to $(\hat s_{t|t}, \nu_t)$, identical
in the historical and live regimes, and by stationarity of the system dynamics the
induced context law satisfies $P^{\text{hist}}(\xb \mid Q_0, R_0) =
P^{\text{live}}(\xb \mid Q_0, R_0)$. Conditional on $\xb$, Assumption~2 and the
idempotent-projection hypothesis put us exactly in the setting of
Theorem~\ref{thm:gate}, which gives $P^{\text{hist}}(\eexec \mid \xb) =
P^{\text{live}}(\eexec \mid \xb)$. Part (a) is then Theorem~\ref{thm:gate}'s conclusion
applied to $(\xb, \eexec, r)$ tuples. Part (b) follows from the same equality applied to
the conditional law of the innovations: $\nu_t$ is a measurable function of the
measurement history and $\eexec_t$, so matching executed-action marginals imply matching
innovation laws, and the ridge fit of \S\ref{sec:kalman-instance} is therefore fitted on
a sample from the live distribution. \qed
\end{proof}

\begin{proof}[Proof of Proposition~\ref{prop:p0-consistency}]
Write $\widehat{\Var}_{\text{cal}}$ and $\widehat{\Var}_{\text{fit}}$ for the empirical
innovation covariances on the two segments of Algorithm~\ref{alg:sequential-p0}.

\paragraph{Step 1: The calibration segment identifies $R$.}
By (P1) the innovation process is stationary and ergodic, so the Birkhoff ergodic
theorem gives $\widehat{\Var}_{\text{cal}} \to \E[\nu\nu^\top]$ almost surely as
$|\D_{\text{cal}}| \to \infty$. By (P2) the prediction-error covariance on this segment
is $P_0^{\text{prior}}$, hence $\E[\nu\nu^\top] = H P_0^{\text{prior}} H^\top + R$ and
the step-2 estimator satisfies
\begin{equation}
  \widehat{\Var}_{\text{cal}} - H P_0^{\text{prior}} H^\top \;\longrightarrow\; R
  \qquad \text{a.s.}
\end{equation}

\paragraph{Step 2: The eigenvalue floor is asymptotically inactive.}
By (P3), $R \succ \rho I$. Almost-sure convergence implies
$\lambda_{\min}(\widehat{\Var}_{\text{cal}} - H P_0^{\text{prior}} H^\top) > \rho$ for
all sufficiently large $|\D_{\text{cal}}|$, so the projection $\Pi_{\succeq \rho}$ acts
as the identity with probability tending to one and $\hat R \to R$ a.s.

\paragraph{Step 3: The fitting segment identifies $S$, and the split removes the
circularity.}
By (P1) again, $\widehat S = \widehat{\Var}_{\text{fit}} \to H P_0 H^\top + R$ a.s. The
two segments are disjoint, and under the summable-mixing hypothesis of (P1) the
dependence between statistics computed on them decays with the separation, so
$\operatorname{Cov}(\hat R, \widehat S) \to 0$. This is the property the split is for:
had $\hat R$ and $\widehat S$ been computed on the same sample, the error in $\hat R$
would appear with the opposite sign in $\widehat S - \hat R$ and the difference would
be systematically compressed toward zero. (When the filter is correctly specified the
innovations are a martingale difference sequence and the covariance is exactly zero for
any split; under the misspecified fixed $(Q_0, R_0)$ it is only asymptotically zero.)

\paragraph{Step 4: Back-substitution.}
Combining Steps 1--3, $\widehat S - \hat R \to H P_0 H^\top$ a.s. The map
$A \mapsto H^{+} A (H^{+})^\top$ is continuous, so by the continuous mapping theorem
$H^{+}(\widehat S - \hat R)(H^{+})^\top \to H^{+} H P_0 H^\top (H^{+})^\top$. By (P3)
$P_0 \succ 0$, so the limit is positive definite and the projection $\Pi_{\succeq 0}$ is
also asymptotically inactive.

\paragraph{Step 5: What is and is not recovered.}
If $H$ has full column rank then $H^{+}H = I$ and the limit is exactly $P_0$. In general
$H^{+}H = \Pi_{\operatorname{row}(H)}$ is the orthogonal projector onto the row space,
and the limit is $\Pi_{\operatorname{row}(H)} P_0 \Pi_{\operatorname{row}(H)}$: the
innovations carry no information about the component of the state error lying in
$\ker(H)$, which is unobserved by construction. On that complement the estimator returns
the floor value and the filter effectively retains its prior. This is the honest scope
of the procedure, and it is the typical case in practice, where the measurement
dimension $m$ is smaller than the state dimension $n$. \qed
\end{proof}

\end{document}